\documentclass{article}
\usepackage[main, final]{neurips_2026}
\usepackage[utf8]{inputenc}
\usepackage[T1]{fontenc}
\usepackage{url}
\usepackage{booktabs}
\usepackage{amsfonts}
\usepackage{nicefrac}
\usepackage{microtype}
\usepackage{xcolor}
\usepackage{pifont}
\usepackage{amsmath}
\usepackage{amssymb}
\usepackage{amsthm}
\usepackage{wrapfig}
\usepackage{graphicx}
\usepackage{array}
\usepackage{float}
\usepackage{placeins}
\usepackage{xcolor}         
\usepackage{pifont}
\usepackage{paralist}
\usepackage{amsmath}
\usepackage{amssymb}
\usepackage{algpseudocode}
\usepackage{algorithm}
\usepackage{hyperref}   
\usepackage{amsthm}
\newtheorem{theorem}{Theorem}

\usepackage{wrapfig}
\usepackage{graphicx}
\usepackage[table]{xcolor}

\newtheorem{lemma}{Lemma}

\newtheorem{definition}{Definition}

\newcolumntype{C}{>{\centering\arraybackslash}c}

\newcommand{\Mc}{\mathcal{M}_c}
\newcommand{\Xc}{\mathcal{X}}
\newcommand{\abar}{\bar\alpha}
\newcommand{\trS}{\operatorname{tr}\Sigma_c}
\usepackage[skip=3.0pt plus 1pt]{parskip}

\title{ManifoldCache: Training-Free Diffusion Acceleration via Constraint Manifold Caching}

\author{%
  Prashant Pandey\thanks{Equal contribution.}  \hspace{0.001cm}       
  \thanks{Correspondence to: \texttt{getprashant57@gmail.com}} \\ 
  Indian Institute of Technology, Delhi \\
  \And 
  Devineni Sri Venkatraya Chowdary\footnotemark[1] \\ 
  Indian School of Mines, Dhanbad \\
  \And 
  Brejesh Lall \\ 
  Indian Institute of Technology, Delhi \\
}

\begin{document}
\maketitle


\begin{abstract}
Diffusion models for structured scientific generation must produce
samples satisfying hard geometric constraints imposed by physics,
chemistry, or biology, yet inference in these settings is
prohibitively slow, demanding hundreds to thousands of neural-function
evaluations per sample.
We unify eight state-of-the-art models spanning medical volumetrics,
molecular conformations, protein backbone design, crystal structure
prediction, and multi-view 3D scenes under a single abstraction,
\emph{Constraint-Manifold Diffusion Models} (CMDMs), in
which the target distribution is supported on a manifold defined by
an externally specified constraint map.
Existing acceleration families transfer poorly to this class: quantization
exhausts memory on high-dimensional volumetric operators; pruning
breaks constraint fidelity; fast ODE solvers allow trajectories to
drift off the constraint manifold; and feature-caching heuristics are
blind to constraint geometry and can cause mode confusion in the
high-noise regime.
We introduce \textsc{ManifoldCache}, the first training-free,
data-free accelerator designed from first principles for CMDMs.
The key insight is that the conditional score decomposes orthogonally
into a normal component, which enforces constraint satisfaction, and a
tangential component, which navigates within the manifold.
Building on this structure, we show for a two-mode Gaussian model
that the noise-schedule midpoint, where signal and noise have equal
energy, separates two regimes: once signal dominates, the error of a
cached estimate is provably bounded, while before that point a
strictly positive fraction of cached estimates land in the wrong mode,
and this fraction does not vanish as the midpoint is approached.
We further prove that, under an explicit condition on the balance
between the two score components, deeper network blocks admit larger
certified cache strides, as a consequence of the score decomposition
propagating through block Jacobians.
The resulting schedule requires one integer comparison per block per
step, zero calibration data, and zero training overhead.
Across all eight CMDMs, \textsc{ManifoldCache} delivers wall-clock
speedups of $1.2$--$1.9\times$ and stays closest to full-inference
quality on each model's primary metric among the feature-caching
methods we compare; on Med-DDPM it also offers a better
quality--speed trade-off than quantization, pruning, and fast-solver
baselines. Our code is available at
\url{https://github.com/prinshul/Manifoldcache.git}.
\end{abstract}

\section{Introduction}
\label{sec:introduction}

Diffusion models have transformed generative modelling across
images~\citep{ho2020denoising,rombach2022high},
audio~\citep{kong2020diffwave}, and
video~\citep{ho2022video}.
Their most consequential frontier lies in the structured spaces of
science and engineering: volumetric anatomical
atlases~\citep{dorjsembe2024conditional}, molecular conformations and
graphs~\citep{liu2026graph,frank2026sampling,kong2025unimomo}, protein
backbones~\citep{geffner2025proteina}, periodic crystal
lattices~\citep{jiao2024space}, and multi-view 3D
scenes~\citep{yao2025sv4d,bahmani2026lyra}.
What unites these applications is the fact that their
target distributions are supported not on the full ambient space
$\Xc$, but on a \emph{constraint manifold}
$\Mc = \{x\in\Xc : g(x;c)=0\}$,
where the constraint map $g$ encodes physics, chemistry, geometry, or
biology.
We call this class \emph{Constraint-Manifold Diffusion
Models} (\textbf{CMDMs}).
CMDMs are central to drug discovery~\citep{kong2025unimomo},
\textit{de~novo} protein design~\citep{geffner2025proteina}, and crystal
structure prediction~\citep{jiao2024space}, yet inference is
prohibitively slow: Med-DDPM~\citep{dorjsembe2024conditional} requires
{2\,min\,45\,sec} per sample on a single A100 GPU, demanding
hundreds to thousands of neural-function evaluations (NFEs).

\begin{wrapfigure}{r}{0.6\columnwidth}
  \vspace{-1.9em}
  \centering
  \includegraphics[width=0.58\columnwidth]{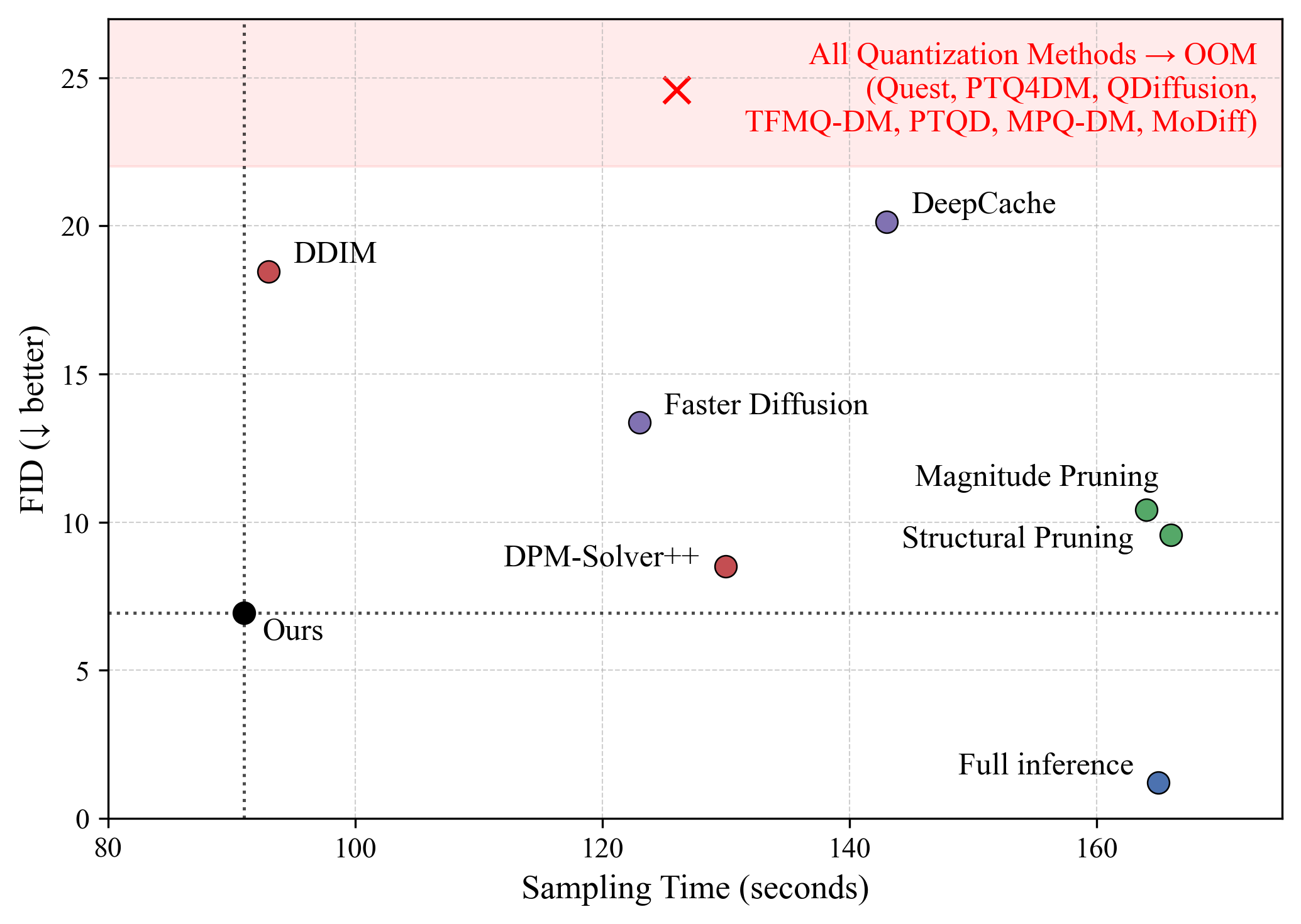}
  \caption{Quality--speed trade-off on
    Med-DDPM~\citep{dorjsembe2024conditional} across quantization, pruning,
    solver, and caching
    baselines.
    \textsc{ManifoldCache} achieves 1\,min\,34\,sec per sample (full
    inference: 2\,min\,45\,sec); all baselines either degrade quality, slow down, or incur
    OOM.}
  \label{fig:teaser}
  \vspace{-0.4em}
\end{wrapfigure}

\textbf{Why existing methods fall short on CMDMs.}
CMDMs employ fundamentally different backbone
architectures (volumetric convolutions, $\mathrm{SE}(3)$-equivariant
networks, graph transformers), whose feature dynamics are poorly
matched by heuristics designed for pixel-space UNets and vision
transformers.
\textbf{\emph{Post-training quantization}}~\citep{shang2023post,li2023q,
huang2024tfmq,he2023ptqd,feng2025mpq,gao2025modulated,wang2025quest} requires a
calibration pass that exhausts GPU memory on volumetric operators such
as the Conv3D layers in Med-DDPM.
\textbf{\emph{Pruning}}~\citep{han2015learning,fang2023structural}
removes weights that collectively enforce $g$; even 30\% sparsity
produces samples violating hard geometric constraints.
\textbf{\emph{Faster ODE solvers}}~\citep{song2020denoising,lu2025dpm} exploit
trajectory smoothness that holds only in the clean limit; large steps
allow drift off $\Mc$, and for discrete-diffusion
variants~\citep{liu2026graph} they are categorically
inapplicable.
\emph{Feature-caching methods} (DeepCache~\citep{ma2024deepcache},
SmoothCache~\citep{liu2025smoothcache},
TeaCache~\citep{liu2025timestep},
FORA~\citep{selvaraju2024fora}, and
others) exploit near-stationarity of
activations but apply cache-trigger rules blind to constraint
geometry, causing over-caching (collapsing constraint fidelity) or
under-caching (negligible speedup).
Crucially, none identifies the schedule midpoint $\abar_t=\tfrac{1}{2}$
as a safe-caching boundary, and none assigns block-level strides from
depth-dependent score structure.

We propose \textbf{\textsc{ManifoldCache}}, the first
inference-acceleration framework for CMDMs, derived from the
orthogonal decomposition of the conditional score into a normal
component $s_t^\perp$ (enforcing constraint satisfaction) and a
tangential component $s_t^\parallel$ (navigating along $\Mc$).
Theorem~\ref{thm:threshold} shows, for a two-mode Gaussian model,
that $T^*=\max\{t:\abar_t\ge\tfrac{1}{2}\}$, which depends only on the
noise schedule, separates two regimes: for $\tau\le T^*$ the expected
squared error of the frozen Tweedie estimate is at most
$\tfrac{1}{2}\trS$ and the probability of landing in the wrong mode is
small for well-separated modes, while for $\tau>T^*$ a strictly
positive fraction of frozen estimates lands in the wrong manifold mode
for every within-mode spread.
Theorem~\ref{thm:depth} establishes that within the safe phase,
deeper network blocks admit provably larger certified cache strides,
as a formal consequence of the score decomposition propagating through
block Jacobians when the normal-to-tangential power ratio $\kappa_t$
falls below an explicit threshold $\kappa^*$.
\textsc{ManifoldCache} instantiates both results as a lightweight,
training-free schedule: one integer comparison per block per step,
zero calibration, zero training overhead.
On Med-DDPM this yields ${\approx}1.8\times$ wall-clock speedup with
the lowest FID among the caching methods we compare
(Figure~\ref{fig:teaser}, Table~\ref{tab:medddpm_final}).

\textbf{Contributions.}
\textbf{(1)} A unified framework (\S\ref{sec:constraint-framework})
showing that eight state-of-the-art models (Med-DDPM~\citep{dorjsembe2024conditional},
SV4D~2.0~\citep{yao2025sv4d}, DemoDiff~\citep{liu2026graph},
Proteina~\citep{geffner2025proteina}, DiT-MC~\citep{frank2026sampling},
UniMoMo~\citep{kong2025unimomo}, DiffCSP++~\citep{jiao2024space},
Lyra~\citep{bahmani2026lyra}) are all special cases of the CMDM
construction.
\textbf{(2)} A schedule-derived safe-caching threshold $T^*$
(Theorem~\ref{thm:threshold}, \S\ref{sec:optimal-threshold})
that separates a regime of provably bounded caching error from one
with a provably non-vanishing mode-flip probability, depending only
on the noise schedule.
\textbf{(3)} Depth-monotone cacheability via Jacobian decomposition
(Theorem~\ref{thm:depth}, \S\ref{sec:depth-adaptive-schedule}),
providing the first theoretical justification for depth-adaptive
caching with an explicit regime condition.
\textbf{(4)} \textsc{ManifoldCache}
(\S\ref{sec:depth-adaptive-schedule}, Figure~\ref{fig:teaser}),
which on seven of the eight CMDMs is faster than every baseline
while scoring best among the accelerators on the primary quality metric.

\section{Related Work}
\label{sec:related}

\textbf{Constraint-manifold diffusion models.}
A growing class of diffusion models generates samples satisfying hard
geometric, physical, or biological constraints~\citep{dorjsembe2024conditional, jiao2024space,
yao2025sv4d, liu2026graph, geffner2025proteina, frank2026sampling, kong2025unimomo}.
Their backbones (volumetric 3D ConvNets, $\mathrm{SE}(3)$-equivariant
networks, graph transformers, and multi-view ViTs) are heavier than standard 2D UNets; constraint fidelity degrades sharply
under step reduction since the probability flow is only tangent
to~$\mathcal{M}_c$ in the clean limit; and guidance strategies multiply
NFEs by 2--4$\times$.
Together, these factors significantly increase the inference and generation times for CMDMs.

\textbf{Acceleration methods.}
\textit{\underline{Post-training quantization}}~\citep{shang2023post,li2023q,
huang2024tfmq,he2023ptqd,feng2025mpq,gao2025modulated,wang2025quest} reduces per-step compute via
INT8/FP8/INT4 weight and activation compression.
On volumetric CMDMs, calibration passes exhaust GPU memory, resulting in Out of Memory (OOM) (mainly block reconstruction approaches), and quantization noise
accumulates along directions normal to~$\mathcal{M}_c$, where any
error directly manifests as constraint violation. \textit{\underline{Pruning.}}
Magnitude~\citep{han2015learning} and
structural~\citep{fang2023structural} pruning remove low-saliency
weights to reduce per-step cost.
On CMDMs every weight participates in satisfying the constraint
map~$g$; 30\% sparsity already drives generation quality below an
acceptable level, and neither variant reduces the NFE count. \textit{\underline{Fast solvers.}}
DDIM~\citep{song2020denoising} and DPM-Solver++~\citep{lu2025dpm}
reduce NFEs by taking larger probability-flow ODE integration steps,
exploiting trajectory smoothness in unconstrained image models.
On CMDMs, large steps allow trajectories to drift off~$\mathcal{M}_c$
before any corrective force restores them, producing
constraint-violating samples.
For discrete-diffusion variants such as DemoDiff~\citep{liu2026graph}, which operate on token
simplices, high-order ODE solvers are categorically inapplicable.

\textbf{Feature caching.}
DeepCache~\citep{ma2024deepcache},
FasterDiffusion~\citep{li2024faster},
FORA~\citep{selvaraju2024fora},
SmoothCache~\citep{liu2025smoothcache},
TeaCache~\citep{liu2025timestep},
AB-Cache~\citep{yu2025ab},
PFC~\citep{sommer2025predictive},
ToCa~\citep{zou2024accelerating},
dKV-Cache~\citep{ma2026dkvcache},
TaylorSeer~\citep{liu2025reusing},
PAB~\citep{zhao2408real},
AdaCache~\citep{kahatapitiya2025adaptive},
and FrDiff~\citep{so2024frdiff}
exploit the near-stationarity of hidden activations between adjacent
denoising steps via fixed intervals, calibration thresholds,
input-difference proxies, or numerical extrapolation.
These methods share a common limitation: none uses the geometric
structure of~$\mathcal{M}_c$ as a caching criterion, and none
identifies $\bar\alpha_t = \tfrac{1}{2}$ as a safe-caching boundary,
so they may reuse features in the high-noise phase where, as
Theorem~\ref{thm:threshold} shows for a two-mode model, the modes are
not yet resolved and reuse leads to mode confusion with non-vanishing
probability.
\textsc{ManifoldCache} \textit{is the first framework to derive a
depth-adaptive caching schedule with provable error bounds from
constraint geometry,
requiring no training, no calibration data, and no per-step runtime
measurement.}

\textbf{Speculative sampling.}
Speculative methods for diffusion models~\citep{debortoli2025accelerated}
and diffusion language models~\citep{agrawal2025spiffy} draft future
denoising states with a cheaper proposal and then accept or reject them
in a verification step, which reduces the number of expensive model
calls while preserving the output distribution.
\textsc{ManifoldCache} works on a different axis: it keeps every
denoising step and lowers the cost of each step by reusing block
outputs according to a fixed schedule, with no draft model or
verifier. The two approaches are therefore complementary;
Appendix~\ref{app:spec} discusses the differences in more detail and
compares them on Proteina.

\section{Methodology}
\label{sec:methodology}

\textsc{ManifoldCache} rests on a single observation: the physics or
biology that defines a constraint manifold also determines \emph{when}
and \emph{where} a denoiser's computation can be safely skipped.
We formalise this in four steps: \textbf{(i)} defining the CMDM class, \textbf{(ii)} decomposing
the score geometrically, \textbf{(iii)} deriving a universal safe-caching threshold,
and \textbf{(iv)} assembling a closed-form depth-adaptive schedule.
Figure~\ref{fig:geometry} illustrates the main ideas.

\begin{figure}[t]
  \centering
  \includegraphics[width=0.7\textwidth]{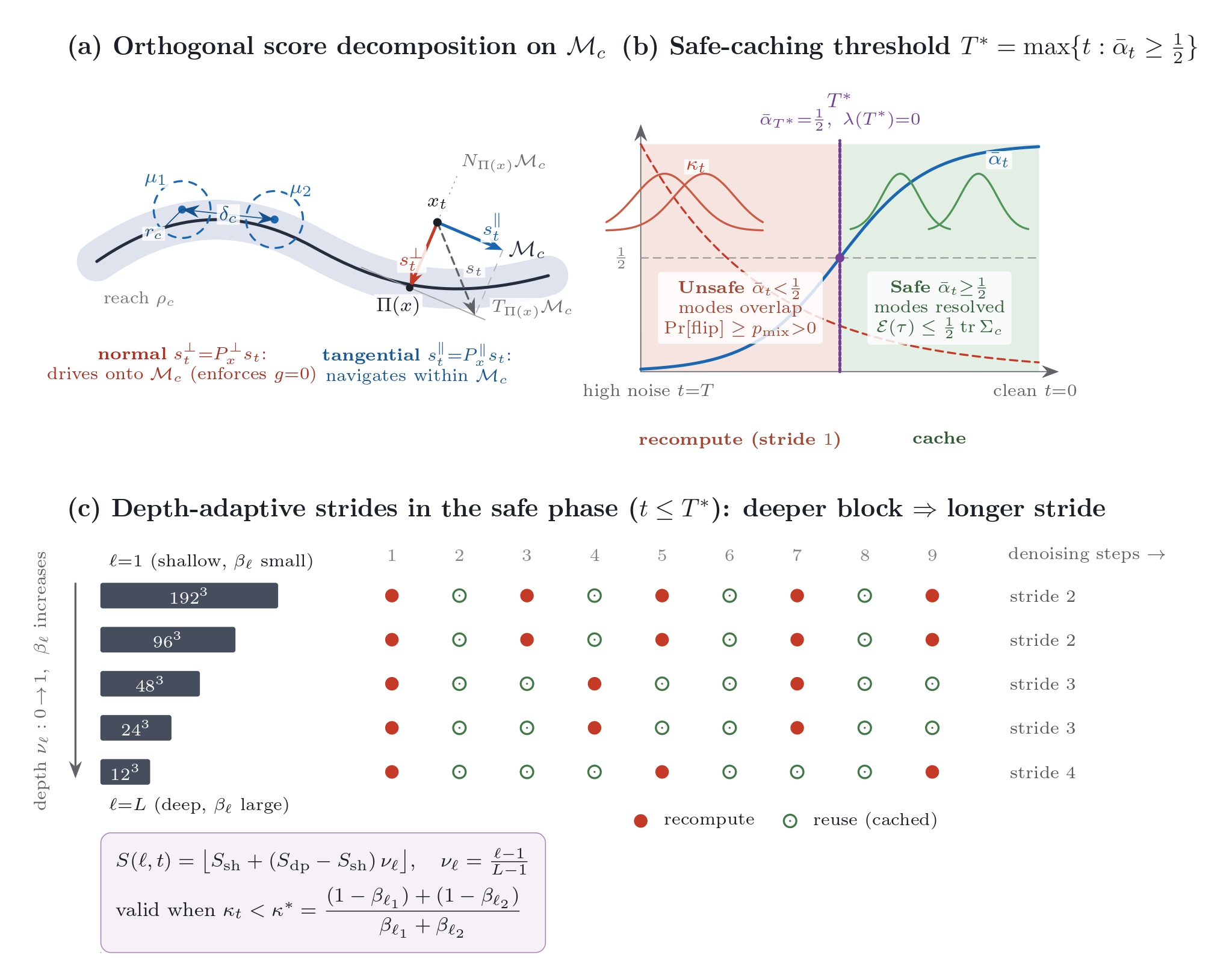}
  \caption{Geometric picture behind \textsc{ManifoldCache}.
  \textbf{(a)} Near $\Mc$, the score splits into a normal part
  $s_t^\perp$, which pulls the sample toward $\Mc$, and a tangential part
  $s_t^\parallel$, which moves it within $\Mc$ toward one of the valid
  modes (here $\mu_1$ and $\mu_2$, separated by $\delta_c$;
  \S\ref{sec:general-caching}).
  \textbf{(b)} The threshold $T^*=\max\{t:\abar_t\ge\tfrac{1}{2}\}$ splits
  the schedule: for $t>T^*$ every block is recomputed, and caching is
  enabled only for $t\le T^*$ (Theorem~\ref{thm:threshold}); the
  $\kappa_t$ curve is schematic.
  \textbf{(c)} For $t\le T^*$, the cache stride grows with block depth
  (Eq.~\eqref{eq:stride-schedule}): shallow blocks are recomputed often,
  and deep blocks are reused for longer.}
  \label{fig:geometry}
\end{figure}

\subsection{Constraint-Manifold Diffusion Models}
\label{sec:constraint-framework}
Let $\Xc\subseteq\mathbb{R}^D$ be the \emph{ambient space}, a voxel
grid, sequence of residue coordinates, periodic unit cell, or latent video
tensor depending on the application.
Structured generative models do not place mass freely across $\Xc$;
they confine samples to a surface carved out by domain knowledge.

\begin{definition}[CMDM]
\label{def:CMDM}
A \emph{Constraint-Manifold Diffusion Model} is specified
by a $C^2$ constraint map $g:\Xc\times\mathcal{C}\to\mathbb{R}^k$ and
a conditioning signal $c\in\mathcal{C}$.
The target distribution $p(\cdot\mid c)$ is supported on the
\emph{constraint manifold}
$\Mc:=\{x\in\Xc:g(x;c)=0\}$.
We assume $\nabla_x g(x;c)$ has full rank $k$ at every $x\in\Mc$
(submersion condition), so by the Implicit Function Theorem $\Mc$ is a
closed $C^2$ submanifold of $\Xc$ of dimension $d_c:=D-k$.
\end{definition}

State-of-the-art CMDM models across various domains are special cases:
Med-DDPM~\citep{dorjsembe2024conditional} ($\Xc=\mathbb{R}^{4\times192\times192\times144}$, $g$
enforces anatomical intensity and continuity), DiffCSP++~\citep{jiao2024space}
($g$ encodes crystallographic space-group symmetry),
Proteina~\citep{geffner2025proteina} ($\Xc=\mathbb{R}^{3N}$, the $C_\alpha$ coordinates of $N$ residues; $g$ encodes
Ramachandran validity and steric clash avoidance), SV4D 2.0~\citep{yao2025sv4d}
and Lyra~\citep{bahmani2026lyra} ($g$ imposes multi-view photometric
consistency).
Despite radically different ambient spaces, all share three properties
our theory exploits: $g$ is externally specified, not learned
\textbf{(P1)}; codimension $k\gg 1$ so a random point in $\Xc$
violates the constraints with probability approaching one \textbf{(P2)};
and $c$ is exact at every diffusion step \textbf{(P3)}.
The constraint map $g$ describes where the data lie: it is a modelling
assumption about the support of $p(\cdot\mid c)$, not an operation
performed at inference. \textsc{ManifoldCache} never evaluates $g$,
projects onto $\Mc$, or checks constraint satisfaction; it reads only
the timestep and the block index. How a model keeps its samples on
$\Mc$ (through training data, conditioning, or an architecture that
builds the constraint in) therefore does not affect how the schedule
is applied. Table~\ref{tab:cmdm} in Appendix~\ref{app:cmdm} lists the
constraint and this mechanism for each of the eight models.

\subsection{Orthogonal Score Decomposition}
\label{sec:general-caching}

We adopt a variance-preserving (VP) forward process
$q(x_t\mid x_0)=\mathcal{N}(\sqrt{\abar_t}\,x_0,(1-\abar_t)\mathbf{I}_D)$,
where $\abar_t\in(0,1]$ is strictly decreasing with $\abar_0=1$.
Denoising runs from $t=T$ (near-isotropic noise) to $t=0$ (clean
data on $\Mc$).
The conditional score is
$s_t(x;c):=\nabla_x\log q_t(x\mid c)$,
where $q_t(x\mid c):=\int q(x_t\mid x_0)\,p(x_0\mid c)\,dx_0$.

\textbf{Geometric structure of the score.}
At high noise, $x_t$ almost surely violates $g$; the score must
point the trajectory back toward $\Mc$.
At low noise, $x_t$ is already near $\Mc$, and what remains is to
select which valid sample to produce by moving \emph{within} $\Mc$.
These two roles are geometrically orthogonal.
Since $\Mc$ is a closed $C^2$ submanifold with positive reach
$\rho_c>0$ (Assumption~(A2), \S\ref{app:assump}), Federer's
theorem~\citep{federer1959curvature} guarantees a tubular neighbourhood
of radius $\rho_c$ in which the nearest-point projection
$\Pi:\Xc\to\Mc$ is smooth.
For any $x$ in this tube, the ambient tangent space splits as
$T_x\Xc=T_{\Pi(x)}\Mc\oplus N_{\Pi(x)}\Mc$,
where the normal space
$N_{\Pi(x)}\Mc=\operatorname{span}\{\nabla_x g_j(\Pi(x);c)\}_{j=1}^k$
is spanned by the constraint gradients (well-defined by the submersion
condition).
Letting $P^\perp_x$ and $P^\parallel_x$ denote the orthogonal
projectors onto the normal and tangential subspaces, the conditional
score decomposes as follows~\citep{stanczuk2024diffusion}:
\begin{equation}
  s_t(x;c)
  \;=\;
  \underbrace{P^\perp_x\,s_t(x;c)}_{
    s_t^\perp\;\text{(drives toward }\Mc\text{)}}
  \;+\;
  \underbrace{P^\parallel_x\,s_t(x;c)}_{
    s_t^\parallel\;\text{(navigates within }\Mc\text{)}}.
  \label{eq:score-split}
\end{equation}
As $t\to 0$, the Tweedie posterior mean
$\widehat{x}_0(x_t,t;c):=\mathbb{E}[x_0\mid x_t,c]$ converges onto
$\Mc$ and $g(x_t;c)\to 0$.
We measure the balance between the two components through the
\emph{normal-to-tangential power ratio}
$\kappa_t:=\mathbb{E}\|s_t^\perp\|^2/\mathbb{E}\|s_t^\parallel\|^2\ge 0$.
Theorem~\ref{thm:depth} below shows how $\kappa_t$ determines which
network blocks change least between adjacent steps;
\textsc{ManifoldCache} itself never measures $\kappa_t$ at runtime.

\subsection{The Safe-Caching Threshold \texorpdfstring{$T^*$}{T*}}
\label{sec:optimal-threshold}

Define the log signal-to-noise ratio
$\lambda(t):=\log[\abar_t/(1-\abar_t)]$ and the
\emph{schedule-derived threshold}
\begin{equation}
  T^* \;:=\; \max\{t:\abar_t\ge\tfrac{1}{2}\}
       \;=\; \max\{t:\lambda(t)\ge 0\},
  \label{eq:Tstar}
\end{equation}
the latest step at which signal energy is still at least as large as noise energy.
$T^*$ is a scalar computable in closed form from the noise schedule,
independent of $\Mc$, $c$, the network, and any data.

\begin{theorem}[Manifold-aware caching threshold]
\label{thm:threshold}
Under Assumption \textnormal{(A3)} (\S\ref{app:assump}),
let $\Sigma_c$ be the total covariance of $p(\cdot\mid c)$ restricted
to $T\Mc$ with eigenvalues $\{\lambda_i\}_{i=1}^{d_c}$ all satisfying
$\lambda_i\ge 1$ \textnormal{(A3.ii)}, and let
$x_0^\tau:=\widehat{x}_0(x_\tau,\tau;c)$ be the frozen Tweedie
estimate with cache error $\mathcal{E}(\tau):=\mathbb{E}\|x_0^\tau-x_0\|^2$.
Write $\delta_c:=\|\mu_1-\mu_2\|$ for the mode separation,
$\sigma_u^2$ for the within-mode variance along the axis joining the
two modes (\S\ref{app:notation}), $K\in\{1,2\}$ for the mode from which
$x_0$ is drawn, $\Phi$ for the standard Gaussian CDF, and
$p_{\mathrm{flip}}(\tau):=\Pr[\Pi_{\Mc}(x_0^\tau)\ne\mu_K]$ for the
probability that the frozen estimate is closer to the other mode.
For every step $\tau\ge 1$,
\begin{equation}
  p_{\mathrm{flip}}(\tau)
  =\Phi\Bigl(-\frac{\delta_c}{2}\sqrt{\frac{\abar_\tau}{V_\tau^u}}\Bigr),
  \qquad V_\tau^u:=1-\abar_\tau+\abar_\tau\sigma_u^2,
  \label{eq:pflip}
\end{equation}
which is continuous and strictly decreasing in $\abar_\tau$.

\noindent\textbf{Safe} ($\tau\le T^*$): the cache error satisfies
$\mathcal{E}(\tau)\le\sum_{i}(1-\abar_\tau)\lambda_i/
(\abar_\tau\lambda_i+1-\abar_\tau)\le\tfrac{1}{2}\trS$
(Lemma~\ref{lem:posterior-var}), and
$p_{\mathrm{flip}}(\tau)\le\Phi(-\delta_c\sqrt{\abar_\tau}/2)
\le\Phi(-\delta_c/(2\sqrt{2}))$.

\noindent\textbf{Unsafe} ($\tau>T^*$): mode confusion has positive
probability,
$p_{\mathrm{flip}}(\tau)>\Phi(-\delta_c/2)=:p_{\mathrm{mix}}>0$,
and this bound does not vanish as $\abar_\tau\uparrow\tfrac{1}{2}$.

\noindent\textbf{Role of $T^*$}: $p_{\mathrm{flip}}(\tau)\ge p_{\mathrm{mix}}$
holds if and only if $\abar_\tau\le 1/(2-\sigma_u^2)$. Hence
$\abar_\tau=\tfrac{1}{2}$, the level that defines $T^*$, is the
largest signal level at which $p_{\mathrm{flip}}(\tau)\ge p_{\mathrm{mix}}$
holds for every within-mode spread $\sigma_u\in(0,1]$.
\end{theorem}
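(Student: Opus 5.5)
The plan is to reduce everything to a one-dimensional Gaussian computation along the axis joining the two modes. Under (A3), $p(\cdot\mid c)$ restricted to $T\Mc$ is a two-component Gaussian mixture. The components have means $\mu_1,\mu_2$, equal weights and a common within-mode covariance, whose variance along $u:=(\mu_2-\mu_1)/\delta_c$ is $\sigma_u^2$.

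\textbf{Flip probability.} Condition on $K=1$ and write $x_\tau=\sqrt{\abar_\tau}x_0+\sqrt{1-\abar_\tau}\,\varepsilon$. The projection of $x_\tau$ onto $u$ is then Gaussian with mean $\sqrt{\abar_\tau}\langle\mu_1,u\rangle$ and variance $V_\tau^u=1-\abar_\tau+\abar_\tau\sigma_u^2$. Under the symmetric-mixture assumption the posterior mean is an odd function about the midpoint $(\mu_1+\mu_2)/2$ along $u$, and it is monotone in the $u$-coordinate of $x_\tau$; I will check monotonicity from the posterior weights, which are logistic in a linear functional of $x_\tau$. Hence $\Pi_{\Mc}(x_0^\tau)$ lands nearer $\mu_2$ exactly when the $u$-coordinate of $x_\tau$ crosses the scaled midpoint $\sqrt{\abar_\tau}\langle(\mu_1+\mu_2)/2,u\rangle$. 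That crossing has probability $\Phi\bigl(-\tfrac{\delta_c}{2}\sqrt{\abar_\tau}/\sqrt{V_\tau^u}\bigr)$, and the case $K=2$ follows by symmetry. This gives \eqref{eq:pflip}. Monotonicity comes from writing $\abar/V=1/(1/\abar-1+\sigma_u^2)$, which is increasing in $\abar$; since $\Phi$ is increasing, $p_{\mathrm{flip}}$ is continuous and strictly decreasing in $\abar_\tau$.

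\textbf{Safe and unsafe regimes.} In the safe regime, the cache error equals the Bayes risk of the posterior mean. I bound it by the risk of the best linear (Gaussian-prior) estimator under the total covariance $\Sigma_c$. Diagonalising $\Sigma_c$ gives the per-coordinate posterior variance $(1-\abar)\lambda_i/(\abar\lambda_i+1-\abar)$; this is Lemma~\ref{lem:posterior-var}, which I take as given. For $\abar\ge\tfrac12$ and $\lambda_i\ge1$ (A3.ii), each term is at most $(1-\abar)\lambda_i/\abar\cdot(\cdots)$, which I will simplify to at most $\lambda_i/2$. The denominator satisfies $\abar\lambda_i+1-\abar\ge1$ and $1-\abar\le\tfrac12$, which yields $\le\tfrac12\trS$. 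For the flip bound I use $\sigma_u^2\le1$, so $V_\tau^u\le1$; combined with $\abar_\tau\ge\tfrac12$ this gives the two stated inequalities. In the unsafe regime, $\abar_\tau<\tfrac12$ and $\sigma_u^2\le1$ give $V_\tau^u\ge 1-\abar_\tau>\abar_\tau$, so $\abar_\tau/V_\tau^u<1$ and therefore $p_{\mathrm{flip}}>\Phi(-\delta_c/2)$. This bound is independent of $\abar_\tau$ and so does not vanish as $\abar_\tau\uparrow\tfrac12$.

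\textbf{Role of $T^*$.} The condition $p_{\mathrm{flip}}\ge p_{\mathrm{mix}}$ is equivalent to $\abar/V\le1$, i.e.\ $\abar(2-\sigma_u^2)\le1$. Since $1/(2-\sigma_u^2)$ ranges over $(1/2,1]$ for $\sigma_u\in(0,1]$, its infimum over all spreads is approached as $\sigma_u\to0$, giving the level $\tfrac12$.

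The main obstacle is the first step: rigorously identifying the flip event with a half-space event in $x_\tau$. This requires the mixture to be exactly symmetric (equal weights and covariances), and it requires $\Pi_{\Mc}$ of the posterior mean to respect the bisecting hyperplane. If (A3) does not give this directly, I would restrict to the $u$-axis marginal and argue that the posterior mean's $u$-coordinate is a monotone odd function of the sufficient statistic $\langle x_\tau,u\rangle$ after whitening. In the anisotropic case I expect a whitening step to be needed so that the Fisher discriminant direction coincides with $u$.
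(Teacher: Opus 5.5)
Your proposal is correct and follows the paper's proof essentially step for step. Your reduction to $x_{\tau,u}=\langle x_\tau,u\rangle$ rests on two facts: the posterior log-ratio is linear in $x_{\tau,u}$, and $\langle M_\tau,u\rangle$ has the same sign as $x_{\tau,u}$; this is the paper's Step~1 together with Lemma~2, and your odd-and-monotone formulation is a harmless variant of its case split on $\pi_2\ge\tfrac12$. The rest matches exactly: the linear-MMSE bound, the comparisons $V_\tau^u\le 1$ and $V_\tau^u>\abar_\tau$, and the reduction to $\abar_\tau(2-\sigma_u^2)\le 1$.

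Two small remarks. The whitening fallback is unnecessary, because (A3) already makes $\Sigma_c^{\mathrm{in}}=P_u^\perp+\sigma_u^2P_u$ block-diagonal with respect to $u$, so the discriminant direction is $u$ itself. Also, $V_\tau^u\ge 1-\abar_\tau$ in the unsafe case follows from $\sigma_u^2\ge 0$, not from $\sigma_u^2\le 1$.
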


The safe bound says that freezing the Tweedie estimate at any
$\tau\le T^*$ costs at most half the within-manifold variance, and that
the frozen estimate lands in the wrong mode with probability at most
$\Phi(-\delta_c/(2\sqrt{2}))$, which is small for well-separated modes.
Freezing the estimate is the extreme case of caching, in which nothing
is recomputed after step $\tau$.
The unsafe bound says that at every step $\tau>T^*$, and for every
within-mode spread, more than a fraction $p_{\mathrm{mix}}$ of frozen
estimates land in the wrong mode: when $\abar_\tau<\tfrac{1}{2}$, noise
dominates signal and the noisy state cannot reliably tell the modes
apart.
Because $p_{\mathrm{flip}}$ decreases continuously, $T^*$ does not mark
a sudden jump; it is the largest signal level at which the unsafe bound
is guaranteed for every within-mode spread.
The two-mode model is a simplified stand-in for the multimodal
distributions of real CMDMs (\S\ref{sec:discussion}).
The proof is in \S\ref{sec:proof-t1}; caching is enabled only for
$t\le T^*$.

\subsection{Depth-Adaptive Caching Schedule}
\label{sec:depth-adaptive-schedule}

Theorem~\ref{thm:threshold} tells us \emph{when} to cache.
A uniform stride across all blocks wastes the available budget: deep
blocks, operating at coarse resolution and tracking slowly-evolving
global structure, are far more cacheable than shallow blocks tracking
rapidly-changing fine-grained details.
We formalise this with a second theorem.

Let the denoiser consist of $L$ residual blocks $\{f_\ell\}_{\ell=1}^L$
with features $h_\ell(x,t,c)\in\mathbb{R}^{D_\ell}$ and $h_0:=x$.
The \emph{per-block drift}
$\delta_\ell(t):=\|h_\ell(x_{t'},t,c)-h_\ell(x_t,t,c)\|$
measures output change between adjacent steps $t>t'$.
Block $\ell$ is \emph{$\varepsilon$-cacheable with stride $S$} if
$\mathbb{E}\sum_{k=1}^{S-1}\delta_\ell(t-k\Delta t)\le\varepsilon$.

The score decomposition~\eqref{eq:score-split} passes through the
input Jacobian $J_\ell:=\nabla_x h_\ell$: input perturbations
$\Delta x=P^\perp_x\Delta x+P^\parallel_x\Delta x$ produce feature
perturbations $J_\ell\Delta x$.
By Assumption~(A5) (\S\ref{app:assump}), $J_\ell$ has directional upper-Lipschitz constants
$L_\ell^\perp$ and $L_\ell^\parallel$ in the normal and tangential
directions.
Under unit-gain normalisation $L_\ell^\perp+L_\ell^\parallel=1$, set
$\beta_\ell:=L_\ell^\perp\in(0,1)$, the \emph{normal sensitivity} of
block $\ell$.
The depth-ordering condition $\beta_{\ell_1}<\beta_{\ell_2}$ for
$\ell_1<\ell_2$ (Assumption~(A5)) captures the backbone hierarchy; in a
U-Net such as Med-DDPM, shallow blocks at resolution $192{\times}192{\times}144$ must
respond to rapidly changing local structure, while deep blocks at
$12{\times}12{\times}9$ attend to global, constraint-relevant directions.
Define the \emph{drift envelope}
$B(\beta;\kappa):=\beta^2\kappa+(1-\beta)^2$.

\begin{theorem}[Depth-monotone cacheability]
\label{thm:depth}
Under Assumptions \textnormal{(A1)}, \textnormal{(A2)}, and
\textnormal{(A4)--(A6)} (\S\ref{app:assump}), for any $t\le T^*$ the
mean-square drift of every block $\ell$ satisfies
\begin{equation}
  \mathbb{E}\,\delta_{\ell}(t)^2
  \;\le\;
  2B(\beta_{\ell};\kappa_t)\,\Theta_t\,\Delta t^2+O(\Delta t^3),
  \label{eq:drift-ratio-upper}
\end{equation}
where $\Theta_t>0$ is defined in \S\ref{sec:proof-t2}.
For any $\ell_1<\ell_2$ with $\beta_{\ell_1}<\beta_{\ell_2}$, the
envelope of the deeper block is strictly smaller,
$B(\beta_{\ell_2};\kappa_t)<B(\beta_{\ell_1};\kappa_t)$, if and only if
\begin{equation}
  \kappa_t \;<\; \kappa^* \;:=\;
  \frac{(1-\beta_{\ell_1})+(1-\beta_{\ell_2})}
       {\beta_{\ell_1}+\beta_{\ell_2}} \;>\; 0.
  \label{eq:kappa-threshold}
\end{equation}
When \eqref{eq:kappa-threshold} holds, the certified strides
$\widehat{S}_\ell$ of \S\ref{sec:proof-t2} satisfy
\[
  \frac{\widehat{S}_{\ell_2}-1}{\widehat{S}_{\ell_1}-1}
  =\sqrt{\frac{B(\beta_{\ell_1};\kappa_t)}{B(\beta_{\ell_2};\kappa_t)}}\,
  \bigl(1+O(\Delta t)\bigr),
\]
which exceeds one for small enough $\Delta t$: deeper blocks receive
strictly longer certified strides.
Under the additional transversality condition \textnormal{(A5$'$)},
the deeper block also has strictly smaller actual drift,
$\mathbb{E}\,\delta_{\ell_2}(t)^2<\mathbb{E}\,\delta_{\ell_1}(t)^2$,
whenever $B(\beta_{\ell_1};\kappa_t)/B(\beta_{\ell_2};\kappa_t)
>2/\bigl((1-\alpha)\gamma_*^2\bigr)$ and $\Delta t$ is small enough,
with $\alpha$ and $\gamma_*$ as in \textnormal{(A5$'$)}.
\end{theorem}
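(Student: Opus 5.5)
The argument has four parts: a Taylor expansion of the block drift, an exact algebraic comparison of the two envelopes, a first-order stride computation, and a lower bound on the drift that uses (A5$'$).

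\textbf{Drift bound.} Write $x_{t'}-x_t=\Delta x$ along the reverse-time update. By (A1)--(A2) and the $C^2$ smoothness assumed in (A4), $\Delta x=-\Delta t\,v_t(x_t)+O(\Delta t^2)$, where $v_t$ is the probability-flow drift. The drift is an affine combination of $x_t$ and $s_t$, and (A6) controls the $x_t$ part, so it decomposes through \eqref{eq:score-split}.

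Taylor-expanding $h_\ell$ in $x$ gives $h_\ell(x_{t'},t,c)-h_\ell(x_t,t,c)=J_\ell\Delta x+O(\|\Delta x\|^2)$. Split $\Delta x=P^\perp_x\Delta x+P^\parallel_x\Delta x$. The directional Lipschitz constants of (A5) give $\|J_\ell\Delta x\|\le\beta_\ell\|P^\perp\Delta x\|+(1-\beta_\ell)\|P^\parallel\Delta x\|$. Squaring with $(a+b)^2\le 2a^2+2b^2$, which is the source of the factor $2$, and taking expectations yields
\[
2\Delta t^2\bigl(\beta_\ell^2\,\mathbb{E}\|P^\perp v_t\|^2+(1-\beta_\ell)^2\,\mathbb{E}\|P^\parallel v_t\|^2\bigr).
\]
Set $\Theta_t$ to be $\mathbb{E}\|P^\parallel v_t\|^2$, the tangential score power up to the schedule coefficient, restricted to $t\le T^*$ so that the tube assumption applies. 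With the normal power written as $\kappa_t\Theta_t$, the bracket becomes $B(\beta_\ell;\kappa_t)\Theta_t$, which gives \eqref{eq:drift-ratio-upper}. The remainder is $O(\Delta t^3)$ because it is a cross term between $O(\Delta t)$ and $O(\Delta t^2)$ quantities with bounded moments, as guaranteed by (A4).

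\textbf{Envelope comparison.} This step is pure algebra:
\[
B(\beta_1)-B(\beta_2)=(\beta_1-\beta_2)\bigl[\kappa(\beta_1+\beta_2)-(2-\beta_1-\beta_2)\bigr].
\]
Since $\beta_1-\beta_2<0$, the difference is positive if and only if $\kappa<\kappa^*$. Positivity of $\kappa^*$ follows from $\beta_i\in(0,1)$.

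\textbf{Certified strides.} Apply Cauchy--Schwarz/Jensen to get $\mathbb{E}\delta_\ell\le\sqrt{2B\Theta}\,\Delta t\,(1+O(\Delta t))$. Then define $\widehat S_\ell$ by $(\widehat S_\ell-1)\sqrt{2B_\ell\Theta_t}\,\Delta t=\varepsilon$. Summing over the $S-1$ steps, and using that $\Theta$ and $\kappa$ vary by only $O(\Delta t)$ across each step, gives the stated ratio with a $1+O(\Delta t)$ correction. The ratio exceeds one once $\Delta t$ is small, by the envelope comparison.

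\textbf{Lower bound under (A5$'$), the main obstacle.} The upper bound alone says nothing about actual drift, so I need a matching lower bound for the shallow block. I expect (A5$'$) to give $\|J_\ell u\|\ge\gamma_*(\beta_\ell\|P^\perp u\|+(1-\beta_\ell)\|P^\parallel u\|)$, with the cross-term correlation between the two images bounded by $\alpha$. Using $\|a+b\|^2\ge(1-\alpha)(\|a\|^2+\|b\|^2)$ for nearly transversal $a,b$ then gives
\[
\mathbb{E}\delta_{\ell_1}^2\ge(1-\alpha)\gamma_*^2B_{\ell_1}\Theta_t\Delta t^2-O(\Delta t^3).
\]
Comparing this with the upper bound $2B_{\ell_2}\Theta_t\Delta t^2$ for $\ell_2$ yields strict inequality exactly when $B_{\ell_1}/B_{\ell_2}>2/((1-\alpha)\gamma_*^2)$ and $\Delta t$ is small enough to absorb the cubic terms.

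The delicate point is matching this lower-bound form to the precise statement of (A5$'$). I would also need to ensure that the same $\Theta_t$ appears in both bounds, which requires defining $\Theta_t$ as a tangential power rather than as a supremum.
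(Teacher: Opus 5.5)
Your outline matches the paper's proof: Taylor linearisation with the cross term handled by Cauchy--Schwarz, the projector split, the triangle inequality with $(a+b)^2\le 2(a^2+b^2)$, the factorised envelope difference, Jensen for the strides, and a transversality/AM--GM lower bound. However, two steps do not go through as you set them up.

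\textbf{The drift step.} You keep the $x_t$-part of the VP probability-flow drift and say that (A6) controls it. But (A6) only compares $\mathbb{E}\,\delta_\ell(t_1)^2$ with $\mathbb{E}\,\delta_\ell(t_2)^2$ inside a stride window; it says nothing about the ODE drift. Take $\Theta_t:=\mathbb{E}\|P^\parallel_x v_t\|^2$ with $v_t=a_tx_t+b_ts_t$. Then the normal power $\mathbb{E}\|P^\perp_x v_t\|^2$ is not $\kappa_t\Theta_t$ for the theorem's $\kappa_t=\mathbb{E}\|s_t^\perp\|^2/\mathbb{E}\|s_t^\parallel\|^2$, because $P^\perp_x x_t$ and $P^\parallel_x x_t$ enter, together with their cross terms with the score. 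The paper instead writes the Euler step as $\Delta x=\Delta t\,A_t s_t(x_t;c)+R(\Delta t)$ with a scalar $A_t$. It uses (A4), a fourth-moment bound on the Euler remainder, to get $\mathbb{E}\|P^\perp_x R\|^2,\ \mathbb{E}\|P^\parallel_x R\|^2=O(\Delta t^4)$, and sets $\Theta_t:=A_t^2\,\mathbb{E}\|s_t^\parallel\|^2$. The normal and tangential powers of $\Delta x$ are then exactly $\Delta t^2\kappa_t\Theta_t$ and $\Delta t^2\Theta_t$ up to $O(\Delta t^3)$. You need that form of the step; otherwise $\kappa_t$ must be redefined as a power ratio of the full drift, which changes the statement. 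This choice also settles your worry about the lower bound: $\Theta_t$ is a deterministic, block-independent number, so the same $\Theta_t$ appears in both bounds.

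\textbf{Certification of $\widehat S_\ell$.} The ratio itself follows directly from the definition, since the common factors cancel. What needs work is certifying that strides up to $\widehat S_\ell$ are $\varepsilon$-cacheable, which requires bounding every step of the window. Your claim that $\Theta$ and $\kappa$ vary by $O(\Delta t)$ per step is neither an assumption nor sufficient. The window contains $\widehat S_\ell-1\sim\varepsilon/\Delta t$ steps, a time span that does not shrink with $\Delta t$, so per-step changes accumulate to order one and the window sum is not controlled by the bound at its first step. This is exactly where the paper uses (A6). Jensen plus (A6) gives $\mathbb{E}\,\delta_\ell(t-k\Delta t)\le\sqrt{\eta}\,\overline{D}_\ell$ for every step in the window, with $\overline{D}_\ell=\Delta t\sqrt{2B(\beta_\ell;\kappa_t)\Theta_t}\,(1+O(\Delta t))$. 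Hence every $S$ with $(S-1)\sqrt{\eta}\,\overline{D}_\ell\le\varepsilon$ is $\varepsilon$-cacheable, and $\widehat S_\ell:=1+\varepsilon/(\sqrt{\eta}\,\overline{D}_\ell)$; the common factor $\sqrt{\eta}$ cancels in the ratio.

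\textbf{The lower bound.} The combined inequality $\|J_\ell u\|\ge\gamma_*(\beta_\ell\|P^\perp_x u\|+(1-\beta_\ell)\|P^\parallel_x u\|)$ that you anticipate is not what (A5$'$) provides, and it is not needed. Apply the cosine bound of (A5$'$) to $a=J_\ell P^\perp_x\Delta x$ and $b=J_\ell P^\parallel_x\Delta x$. AM--GM, valid since $\alpha\ge 0$, gives $\|a+b\|^2\ge(1-\alpha)(\|a\|^2+\|b\|^2)$. Then use the separate lower bounds $\|a\|\ge\gamma_*\beta_\ell\|P^\perp_x\Delta x\|$ and $\|b\|\ge\gamma_*(1-\beta_\ell)\|P^\parallel_x\Delta x\|$. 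This is the paper's Step~6, and it yields your display.
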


The threshold $\kappa^*$ is explicit and depends only on
$\beta_{\ell_1}$ and $\beta_{\ell_2}$.
Theorem~\ref{thm:depth} is conditional: it favours longer strides for
deeper blocks when the tangential part of the score dominates
($\kappa_t<\kappa^*$), and we do not derive this condition from the
noise schedule.
The inversion ablation in \S\ref{sec:results} tests the resulting depth
ordering directly.
The proof is in \S\ref{sec:proof-t2}.

\textbf{The \textsc{ManifoldCache} schedule.}
Both theorems together yield a complete closed-form policy.
Let $\nu_\ell:=(\ell-1)/(L-1)\in[0,1]$ be the normalised block depth
and $1\le S_{\mathrm{sh}}\le S_{\mathrm{dp}}$ be integer strides (e.g.\ 2 and 4).
The stride assigned to block $\ell$ at step $t$ is
\begin{equation}
  S(\ell,t) \;=\; \begin{cases}
    1, & t > T^*, \\[3pt]
    \bigl\lfloor S_{\mathrm{sh}}+(S_{\mathrm{dp}}-S_{\mathrm{sh}})\nu_\ell
    \bigr\rfloor, & t \le T^*,
  \end{cases}
  \label{eq:stride-schedule}
\end{equation}
where stride $1$ means always recompute.
Block $\ell$ is reused at step $t\le T^*$ iff
$t_\ell^{\mathrm{last}}-t<S(\ell,t)$, where $t_\ell^{\mathrm{last}}$
is the step at which it was last computed; otherwise it is recomputed
and the cache updated.
The policy costs one integer comparison per block per step:
$T^*$ is read from the noise schedule, $\nu_\ell$ from the block index,
and no calibration or profiling is required.

\section{Implementation and Dataset Details}

\subsection{Models, Datasets, and Metrics}
\label{sec:models-datasets-metrics}

We evaluate \textsc{ManifoldCache} on eight CMDMs (\S\ref{sec:constraint-framework}) across five scientific domains using official open-source checkpoints. \textbf{Med-DDPM}~\citep{dorjsembe2024conditional} is a Conv3D U-Net DDPM for segmentation-conditioned semantic 3D brain MRI synthesis on $192{\times}192{\times}144$ BraTS~2021 volumes, evaluated with volumetric 3D \textbf{FID}. \textbf{SV4D~2.0}~\citep{yao2025sv4d} is a multi-view video diffusion transformer with joint 3D/frame attention for monocular-video-to-4D asset generation, evaluated on Objaverse~\citep{yao2025sv4d} over $8$ view pairs using \textbf{PSNR}, \textbf{SSIM}, and \textbf{LPIPS}, computed against full inference (\S\ref{sec:results}). \textbf{Lyra}~\citep{bahmani2026lyra} is an NVIDIA self-distilled video diffusion model featuring a camera-controlled GEN3C-based video DiT for single-image-to-3D Gaussian Splatting scene generation, evaluated using rendered novel views with \textbf{PSNR}, \textbf{SSIM}, and \textbf{LPIPS}, computed against full inference (\S\ref{sec:results}). \textbf{Proteina}~\citep{geffner2025proteina} is a 400M-parameter non-equivariant flow-matching transformer for hierarchical-CATH-conditioned $C_\alpha$ protein backbone generation trained on AFDB, evaluated by the ProteinMPNN$\rightarrow$ESMFold self-consistency pipeline using mean \textbf{scRMSD}, where scRMSD${<}2$\,\AA{} denotes designable samples. \textbf{DiT-MC}~\citep{frank2026sampling} adapts diffusion transformers to graph-conditioned 3D molecular conformer generation and is evaluated on GEOM-QM9 using mean \textbf{COV-R} and \textbf{COV-P} at $0.75$\,\AA{} RMSD. \textbf{UniMoMo}~\citep{kong2025unimomo} is an E(3)-equivariant geometric latent diffusion model for unified peptide, antibody, and small-molecule binder generation from block-graph representations, evaluated on PepBench using mean \textbf{AAR}, median \textbf{C-RMSD}, and median \textbf{L-RMSD}. \textbf{DiffCSP++}~\citep{jiao2024space} is a periodic equivariant diffusion model jointly denoising lattices, fractional coordinates, and atom types under explicit space-group/Wyckoff constraints for crystal structure prediction, evaluated on MP-20 with \textbf{Match Rate} and \textbf{RMSD}. \textbf{DemoDiff}~\citep{liu2026graph} is a $0.7$B-parameter discrete graph diffusion transformer for in-context molecular design on Node-Pair-Encoding motif-tokenised graphs, pretrained on $1.6$M ChEMBL+polymer tasks and evaluated on its 33-task benchmark using task-averaged \textbf{Score} and ICL \textbf{Consistency Score}~\citep{liu2026graph}.

\subsection{Hardware and Hyperparameters}
\label{sec:hyperparameters}
 
All inference experiments are run on a single NVIDIA A100 40GB GPU
with the official inference scripts and default schedulers, samplers,
guidance scales, and step counts of each model.
\textsc{ManifoldCache} is parameterised by $(S_{\mathrm{sh}},S_{\mathrm{dp}})$,
the shallow and deep stride bounds in
Eq.~\eqref{eq:stride-schedule}, and by the cache-enable cutoff which
defaults to $T^*$ from Eq.~\eqref{eq:Tstar}.
We also report
two ablations that enable caching earlier: $(T^{*}/2)$, which halves
the full-computation warm-up before $T^*$, and $(0)$, which removes the
warm-up so that caching is enabled across the full schedule, including
the unsafe regime.
For UniMoMo and Lyra we use $(2,4)$ and $(2,3)$ respectively;
for all other models we sweep $(S_{\mathrm{sh}},S_{\mathrm{dp}})\in
\{(2,3),(2,4),(2,5),(3,5)\}$, together with the inverted setting
$(4,2)$ as an ablation, and report the configurations shown in the
tables.
 
\subsection{Caching Baselines and Their Applicability}
\label{sec:baselines}
 The CMDM literature spans four backbone families: volumetric
3D U-Nets (Med-DDPM), $\mathrm{SE}(3)$/E(3)-equivariant networks
(UniMoMo, DiffCSP++), graph and flow-matching transformers (DemoDiff,
DiT-MC, Proteina), and multi-view (4D) video models (SV4D~2.0, Lyra),
and existing caching methods are designed against very
different architectural assumptions.
We therefore restrict each table to baselines that are \emph{actually
runnable} on that backbone with their published assumptions intact;
the rest are categorically inapplicable.
We summarise the per-model selection rationale below; \textbf{\textsc{ManifoldCache} is
backbone-agnostic and runs on all eight.}

\begin{table}[tb]
\centering
\caption{\textbf{Proteina}~\citep{geffner2025proteina} (de novo
$C_\alpha$ backbone generation, flow-matching DiT trained on
AFDB): runtime, mean self-consistency RMSD (\textbf{scRMSD};
ProteinMPNN $\to$ ESMFold), and peak \textbf{VRAM}.
\textsc{ManifoldCache} $(T^*)\,(2,4)$ has the lowest scRMSD among the
accelerators (2.02 vs.\ 1.93 for full inference).}
\label{tab:runtime_scrmsd}
\small
\setlength{\tabcolsep}{8pt}
\renewcommand{\arraystretch}{0.7} 
\scalebox{0.85}{
\begin{tabular}{l c c c}
\toprule
\textbf{Method} & \textbf{Time (sec)} $\downarrow$ & \textbf{Mean scRMSD} $\downarrow$ & \textbf{VRAM (MiB)} $\downarrow$ \\
\midrule
Full Inference                                   & 90.47 & \textbf{1.93} & 15438 \\
\midrule
SmoothCache~\citep{liu2025smoothcache}           & 60.96 & 2.11 & 16430 \\
FORA~\citep{selvaraju2024fora}                   & 70.62 & 2.31 & 15980 \\
TeaCache~\citep{liu2025timestep}                 & 61.10 & 8.07 & 17800 \\
TaylorSeer~\citep{liu2025reusing}                & 61.08 & 9.11 & 16700 \\
\midrule
\rowcolor{gray!15}
\textsc{ManifoldCache} $(0)\,(2,4)$              & 34.40 & 6.24 & 16192 \\
\rowcolor{gray!15}
\textsc{ManifoldCache} $(T^*/2)\,(2,4)$          & 39.70 & 4.19 & 16190 \\
\rowcolor{gray!15}
\textsc{ManifoldCache} $(T^*)\,(2,5)$            & 46.44 & 2.85 & 16197 \\
\rowcolor{gray!15}
\textsc{ManifoldCache} $(T^*)\,(4,2)$            & 46.12 & 2.44 & 16201 \\
\rowcolor{gray!15}
\textsc{ManifoldCache} $(T^*)\,(2,4)$            & 48.18 & \textbf{2.02} & 16190 \\
\bottomrule
\end{tabular}}
\end{table}

For U-Net-style CMDMs, including \textbf{Med-DDPM} and the convolutional VAE branches of \textbf{SV4D~2.0} and \textbf{Lyra}, we use \textbf{DeepCache}~\citep{ma2024deepcache} and, for Med-DDPM, \textbf{FasterDiffusion}~\citep{li2024faster}, which explicitly cache encoder/decoder or skip-connected U-Net features. Transformer-based CMDMs such as \textbf{Proteina}, \textbf{DiT-MC}, and \textbf{DemoDiff} instead use DiT-native cachers including \textbf{FORA}~\citep{selvaraju2024fora}, \textbf{TeaCache}~\citep{liu2025timestep}, \textbf{SmoothCache}~\citep{liu2025smoothcache}, \textbf{TaylorSeer}~\citep{liu2025reusing}, \textbf{ToCa}~\citep{zou2024accelerating}, \textbf{AB-Cache}~\citep{yu2025ab}, and \textbf{PFC}~\citep{sommer2025predictive}, since these operate on isotropic transformer block outputs or token-feature trajectories compatible with graph and flow-matching transformers. \textbf{DemoDiff} additionally uses \textbf{dKV-Cache}~\citep{ma2026dkvcache}, the only published cacher designed for discrete masked-token diffusion with bidirectional attention. For \textbf{UniMoMo} we use \textbf{SmoothCache} and \textbf{DeepCache}; other DiT cachers violate its E(3)-equivariant message-passing structure. For \textbf{DiffCSP++} we compare against \textbf{DeepCache} only, because current DiT/video-DiT methods assume non-periodic isotropic attention or separable spatial--temporal attention absent in periodic equivariant diffusion. Methods such as \textbf{PAB}~\citep{zhao2408real} and \textbf{AdaCache}~\citep{kahatapitiya2025adaptive} are only applicable to video DiTs with explicit spatial/temporal attention splits, while U-Net cachers are excluded from isotropic transformers lacking skip connections, and continuous-trajectory methods are unsuitable for discrete motif-token diffusion. We use PAB and AdaCache for Lyra, and we additionally compare with \textbf{FrDiff}~\citep{so2024frdiff} on SV4D~2.0.

\begin{table}[tb]
\centering
\caption{\textbf{DiT-MC}~\citep{frank2026sampling} (3D molecular
conformer generation on GEOM-QM9, graph-conditioned DiT):
runtime, mean recall (\textbf{COV-R}) and precision
(\textbf{COV-P}) at a $0.75$\,\AA{} threshold, and
\textbf{VRAM}.}
\label{tab:cov_metrics_clean}
\small
\setlength{\tabcolsep}{8pt}
\renewcommand{\arraystretch}{0.85}
\scalebox{0.805}{
\begin{tabular}{l c c c c}
\toprule
\textbf{Method} & \textbf{Time (sec)} $\downarrow$ & \textbf{COV-R (mean)} $\uparrow$ & \textbf{COV-P (mean)} $\uparrow$ & \textbf{VRAM (MiB)} $\downarrow$ \\
\midrule
Full Inference                                   & 621 & \textbf{0.9577} & \textbf{0.9529} & 30706 \\
\midrule
TeaCache~\citep{liu2025timestep}                 & 565 & 0.9473 & 0.9424 & 32491 \\
AB-Cache~\citep{yu2025ab}                        & 501 & 0.9460 & 0.9430 & 31900 \\
PFC (ABC-Forecasting)~\citep{sommer2025predictive} & 561 & 0.9517 & 0.9428 & 32998 \\
\midrule
  \rowcolor{gray!15}
\textsc{ManifoldCache} $(T^*)\,(2,4)$            & \textbf{410} & \textbf{0.9545} & \textbf{0.9513} & 32360 \\
\bottomrule
\end{tabular}
}
\end{table}

\section{Results and Analysis}
\label{sec:results}

We report runtime and generation quality through various metrics across all eight CMDMs in
Tables~\ref{tab:runtime_scrmsd}--\ref{tab:viewpair_metrics}.
Each table is ordered as follows: Full Inference, applicable caching baselines, and then \textsc{ManifoldCache} configurations.
Figure~\ref{fig:tradeoff-all} (Appendix~\ref{app:tradeoff}) plots the quality--speed trade-off for all eight models, Appendix~\ref{app:flops} reports FLOPs on six models and NFE on two, and Appendix~\ref{app:spec} compares \textsc{ManifoldCache} with speculative sampling.
\textbf{\textsc{ManifoldCache} consistently delivers the best quality--speed
trade-off, often \emph{strictly} dominating every baseline on both
axes simultaneously.}

In the $(0)\,(2,4)$ ablation on Proteina
(Table~\ref{tab:runtime_scrmsd}), enabling caching
across the \emph{entire} schedule rather than only for $t \leq T^*$
causes a severe collapse in quality: mean scRMSD rises from $2.02$ to $6.24$
($3.1\times$ worse).
This is the behaviour Theorem~\ref{thm:threshold} predicts:
\textbf{reusing features while noise still dominates ($t>T^*$) risks
mode confusion}.
The intermediate ablation $(T^*/2)$ is better than $(0)$ but worse than $(T^*)$ on Proteina, and also worse than $(T^*)$ on DiffCSP++, so quality degrades steadily as caching is enabled earlier. 

\begin{table}[tb]
\centering
\caption{\textbf{UniMoMo}~\citep{kong2025unimomo} (peptide binder
design via E(3)-equivariant latent diffusion): runtime,
amino-acid recovery (\textbf{AAR}, mean), complex (\textbf{C-RMSD},
median, $C_\alpha$ after target alignment), ligand
(\textbf{L-RMSD}, median, after peptide alignment), and \textbf{VRAM}.}
\label{tab:unimomo_metrics}
\small
\setlength{\tabcolsep}{8pt}
\renewcommand{\arraystretch}{0.85}
\scalebox{0.88}{
\begin{tabular}{l c c c c c}
\toprule
\textbf{Method} & \textbf{Time (sec)} $\downarrow$ & \textbf{AAR} $\uparrow$ & \textbf{C-RMSD} $\downarrow$ & \textbf{L-RMSD} $\downarrow$ & \textbf{VRAM (MiB)} $\downarrow$ \\
\midrule
Full Inference                                   & 687 & \textbf{0.2562} & \textbf{3.14} & \textbf{1.59} & 28072 \\
\midrule
SmoothCache~\citep{liu2025smoothcache}           & 578 & 0.1890 & 6.10 & 6.04 & 28126 \\
DeepCache~\citep{ma2024deepcache}                & 555 & 0.1622 & 4.98 & 3.10 & 28960 \\
\midrule
  \rowcolor{gray!15}
\textsc{ManifoldCache} $(T^*)\,(2,4)$            & \textbf{490} & \textbf{0.2507} & \textbf{3.37} & \textbf{1.90} & 28148 \\
\bottomrule
\end{tabular}
}
\end{table}

\begin{table}[tb]
\centering
\caption{\textbf{DiffCSP++}~\citep{jiao2024space} (space-group
constrained crystal structure prediction on MP-20):
\textbf{Match Rate} and \textbf{RMSD}, runtime, and \textbf{VRAM}.
\textsc{ManifoldCache} $(T^*)\,(2,4)$ has the highest Match Rate and the
lowest RMSD among the accelerators.}
\label{tab:match_rmsd}
\small
\setlength{\tabcolsep}{8pt}
\renewcommand{\arraystretch}{.85}
\scalebox{0.85}{
\begin{tabular}{l c c c c}
\toprule
\textbf{Method} & \textbf{Match Rate} $\uparrow$ & \textbf{RMSD} $\downarrow$ & \textbf{Time (sec)} $\downarrow$ & \textbf{VRAM (MiB)} $\downarrow$ \\
\midrule
Full Inference                                   & \textbf{0.7880} & \textbf{0.0269} & 1240 & 1886 \\
\midrule
DeepCache~\citep{ma2024deepcache}                & 0.7650 & 0.0305 & 825 & 1894 \\
\midrule
  \rowcolor{gray!15}
\textsc{ManifoldCache} $(T^*/2)\,(2,4)$          & 0.7700 & 0.0303 & 732 & 2015 \\
  \rowcolor{gray!15}
\textsc{ManifoldCache} $(T^*)\,(4,2)$            & 0.7790 & 0.0344 & 802 & 2019 \\
  \rowcolor{gray!15}
\textsc{ManifoldCache} $(T^*)\,(2,4)$            & \textbf{0.7800} & \textbf{0.0273} & 805 & 2014 \\
\bottomrule
\end{tabular}
}
\end{table}


The $(S_{\mathrm{sh}},S_{\mathrm{dp}})=(4,2)$ \emph{inversion}
ablation
isolates whether deeper-block-skipping is the source of the gain
predicted by Theorem~\ref{thm:depth}.
On Med-DDPM (Table~\ref{tab:medddpm_final}), $(4,2)$ yields FID $11.31$ versus $6.93$ for
$(2,4)$, which shows that inverted strides severely deteriorate the FID at essentially
identical wall-clock and on DiffCSP++, Match Rate drops from
$0.7800$ to $0.7790$ with substantially worse RMSD ($0.0344$ vs
$0.0273$).
\textbf{Giving shallow blocks the longer stride is consistently more
costly than giving it to deep blocks}, which matches the depth ordering
of Theorem~\ref{thm:depth}. All baseline schedules are taken as the optimal settings reported in their respective papers. We mainly report \textsc{ManifoldCache} under the schedules (2,3), (2,4), and (2,5), as these consistently yielded quality generation comparable to the baselines while spanning the most informative range of speedup–fidelity trade-offs.

 \begin{table}[tb]
\centering
\caption{\textbf{Med-DDPM}~\citep{dorjsembe2024conditional} (semantic
3D brain MRI synthesis on BraTS, $192{\times}192{\times}144$ Conv3D U-Net):
runtime per sample and 3D \textbf{FID} between generated and real
volumes.
\textsc{ManifoldCache} $(T^*)\,(2,4)$ is $1.8{\times}$ faster than
full inference \emph{and} attains the lowest FID among all
accelerators; Figure~\ref{fig:med} shows samples.}
\label{tab:medddpm_final}
\small
\setlength{\tabcolsep}{8pt}
\renewcommand{\arraystretch}{0.85}
\scalebox{0.85}{
\begin{tabular}{l c c c}
\toprule
\textbf{Method} & \textbf{Time per sample (sec)} $\downarrow$ & \textbf{FID} $\downarrow$ & \textbf{VRAM (MiB)} $\downarrow$ \\
\midrule
Full Inference                                   & 165 & \textbf{1.21} & 22392 \\
\midrule
DeepCache~\citep{ma2024deepcache}                & 143 & 20.14 & 22994 \\
FasterDiffusion~\citep{li2024faster}             & 123 & 13.37 & 25780 \\
\midrule
  \rowcolor{gray!15}
\textsc{ManifoldCache} $(T^*)\,(4,2)$            & 89 & 11.31 & 30531 \\
  \rowcolor{gray!15}
\textsc{ManifoldCache} $(T^*)\,(2,5)$            & 91 & 9.75 & 30550 \\
  \rowcolor{gray!15}
\textsc{ManifoldCache} $(T^*)\,(3,5)$            & 88 & 11.18 & 30528\\
  \rowcolor{gray!15}
\textsc{ManifoldCache} $(T^*)\,(2,4)$            & 94 & \textbf{6.93} & 30611\\
\bottomrule
\end{tabular}}
\end{table}

\begin{table}[tb]
\centering
\caption{\textbf{DemoDiff}~\citep{liu2026graph} (in-context molecular
design, 0.7B discrete graph diffusion transformer with motif-token
NPE): runtime, official task-averaged design \textbf{Score}, and ICL
\textbf{Consistency Score}.
\textsc{ManifoldCache} $(T^*)\,(2,3)$ retains $89\%$ of the design
Score at $1.5{\times}$ speedup; alternatives lose $24{-}46\%$.}
\label{tab:caching_metrics}
\small
\setlength{\tabcolsep}{8pt}
\renewcommand{\arraystretch}{0.85}
\scalebox{0.9}{
\begin{tabular}{l c c c c}
\toprule
\textbf{Method} & \textbf{Time (sec)} $\downarrow$ & \textbf{Score} $\uparrow$ & \textbf{Consistency Score} $\uparrow$  & \textbf{VRAM (MiB)} $\downarrow$ \\
\midrule
Full Inference                                   & 1323.49 & \textbf{0.8050} & \textbf{0.2083} & 15584 \\
\midrule
FORA~\citep{selvaraju2024fora}                   & \phantom{0}867.61 & 0.5970 & 0.0799 & 17761 \\
ToCa~\citep{zou2024accelerating}                 & \phantom{0}975.87 & 0.6118 & 0.1237 &  18450\\
dKV-Cache~\citep{ma2026dkvcache}                 & 1111.50 & 0.4381 & 0.0780 & 16109 \\
\midrule
  \rowcolor{gray!15}
\textsc{ManifoldCache} $(T^*)\,(2,5)$            & \phantom{0}814.00 & 0.6090 & 0.0948 &  18460\\
  \rowcolor{gray!15}
\textsc{ManifoldCache} $(T^*)\,(2,4)$            & \phantom{0}829.00 & 0.6250 & 0.0980 & 18119\\
  \rowcolor{gray!15}
\textsc{ManifoldCache} $(T^*)\,(2,3)$            & \phantom{0}886.00 & \textbf{0.7170} & \textbf{0.1480} & 17944\\
\bottomrule
\end{tabular}
}
\end{table}

On Proteina (Table~\ref{tab:runtime_scrmsd}), TeaCache and TaylorSeer, both relying on continuous
feature-trajectory extrapolation, reach scRMSD $8.07$ and $9.11$
respectively, $4{-}5\times$ worse than full inference.
This is not a mild quality loss but a regime change. A likely cause is
that forecasting extrapolates features without any correction toward
$\Mc$, which accelerates drift off the manifold.
In contrast, \textsc{ManifoldCache} at $(2,4)$ delivers scRMSD
$2.02$, close to the $1.93$ of full inference, while still
running $1.9\times$ faster.
The same pattern holds for DemoDiff (Table~\ref{tab:caching_metrics}), where dKV-Cache (the
only competing discrete-diffusion cacher) drops the design Score by
$\sim\!46\%$ relative to full inference.
 
On Lyra (Table~\ref{tab:video_metrics} and Figure~\ref{fig:lyra}), \textsc{ManifoldCache} achieves PSNR $39.27$ at $3416$\,s
versus DeepCache's PSNR $31.42$ at $3522$\,s and AdaCache's PSNR
$34.35$ at $3491$\,s, making it simultaneously \emph{faster and
$\sim\!5{-}8$ dB higher in PSNR}.
On SV4D~2.0 (Table~\ref{tab:viewpair_metrics}), \textsc{ManifoldCache} reaches PSNR $40.75$ /
LPIPS $0.058$ in $336$\,s, an absolute improvement of $13.9$ dB
PSNR and $0.131$ LPIPS over DeepCache at slightly lower latency.
\textit{Such joint gains in speed and quality over caching baselines
suggest that aligning the cache schedule with the structure of the
score avoids much of the usual speed--quality trade-off.} Note that for SV4D~2.0 and Lyra, we treat the full-inference run as ground truth, so it reports only wall-clock time and no quality metrics, while all baseline and \textsc{ManifoldCache} numbers are computed against this full-inference reference.
Several baselines are both slower and less accurate than
\textsc{ManifoldCache}: they are not designed to restrict reuse to
$t\le T^*$, and our $(0)$ ablation shows that reuse during the
high-noise steps is costly; reducing their reuse to recover quality
would make them slower still.

Caching incurs an unavoidable activation-storage cost proportional to
cached feature volume~\citep{lv2024fastercache}. Across
the tables, \textsc{ManifoldCache} uses more VRAM than full inference,
from $0.3\%$ more on UniMoMo to $36\%$ more on Med-DDPM, where it
also uses more than every baseline; on the other models
its memory use is comparable to that of the caching baselines.

\begin{table}[tb]
\centering
\caption{\textbf{Lyra}~\citep{bahmani2026lyra} (single-image $\to$
3D Gaussian Splatting via camera-controlled video DiT, evaluated on multi-view synthesis of dynamic scenes): \textbf{PSNR}, \textbf{SSIM},
\textbf{LPIPS}, and runtime.
\textsc{ManifoldCache} has the best PSNR, SSIM, and runtime; PAB has a
slightly lower LPIPS but is $1.4{\times}$ slower.}
\label{tab:video_metrics}
\small
\setlength{\tabcolsep}{8pt}
\renewcommand{\arraystretch}{0.85}
\scalebox{0.85}{
\begin{tabular}{l c c c c}
\toprule
\textbf{Method} & \textbf{PSNR} $\uparrow$ & \textbf{SSIM} $\uparrow$ & \textbf{LPIPS} $\downarrow$ & \textbf{Time (sec)} $\downarrow$ \\
\midrule
Full Inference                                   & --    & --    & --    & 5577.64 \\
\midrule
DeepCache~\citep{ma2024deepcache}                & 31.42 & 0.9416 & 0.0890 & 3522.21 \\
PAB~\citep{zhao2408real}                         & 36.85 & 0.9750 & \textbf{0.0552} & 4744.55 \\
AdaCache~\citep{kahatapitiya2025adaptive}        & 34.35 & 0.9734 & 0.0910 & 3491.62 \\
\midrule
  \rowcolor{gray!15}
\textsc{ManifoldCache} $(T^*)\,(2,3)$            & \textbf{39.27} & \textbf{0.9817} & 0.0564 & \textbf{3416.05} \\
\bottomrule
\end{tabular}}
\end{table}

\begin{table}[tb]
\centering
\caption{\textbf{SV4D~2.0}~\citep{yao2025sv4d} (multi-view video
diffusion for dynamic 4D asset generation, 8 view pairs on
Objaverse): \textbf{PSNR}, \textbf{SSIM}, \textbf{LPIPS}, and
runtime.
\textsc{ManifoldCache} $(T^*)\,(2,3)$ is fastest \emph{and} delivers
$+13.9$\,dB PSNR over DeepCache.}
\label{tab:viewpair_metrics}
\small
\setlength{\tabcolsep}{8pt}
\renewcommand{\arraystretch}{0.85}
\scalebox{0.8}{
\begin{tabular}{l c c c c c c}
\toprule
\textbf{Method} & \textbf{PSNR} $\uparrow$ & \textbf{SSIM} $\uparrow$ & \textbf{LPIPS} $\downarrow$ & \textbf{Time (sec)} $\downarrow$ & \textbf{VRAM (MiB)} $\downarrow$ \\
\midrule
Full Inference                                   & --    & --    & --    & 409 & 16520 \\
\midrule
DeepCache~\citep{ma2024deepcache}                & 26.88 & 0.8730 & 0.1891 & 350 & 17885\\
FrDiff~\citep{so2024frdiff}    & 31.94 & 0.9192 & 0.0764 & 347 & 18673 \\
\midrule
  \rowcolor{gray!15}
\textsc{ManifoldCache} $(T^*)\,(2,3)$            & \textbf{40.75} & \textbf{0.9909} & \textbf{0.0579} & \textbf{336} & 18540 \\
\bottomrule
\end{tabular}
}
\end{table}

\textbf{The configuration sweet spot is consistent.}
$(2,4)$ at cutoff $T^*$ is our default and the best of the reported
configurations on Proteina, DiffCSP++, and Med-DDPM.
Other strides expose interpretable trade-offs: $(2,5)$ pushes deep
strides further and gains marginal speed at marginal quality
cost (DemoDiff Score $0.609$ vs $0.625$ at $(2,4)$); $(2,3)$ is the
most conservative setting, gives the best Score on DemoDiff, and is the
setting we use for SV4D~2.0 and Lyra.
This stability across radically different ambient spaces, from $192{\times}192{\times}144$
voxel grids to motif graphs, crystal lattices, and multi-view (4D)
video latents, empirically supports the universality claim of
\S\ref{sec:constraint-framework}: a single closed-form schedule, derived
purely from the noise schedule and block depth, governs cacheability
across CMDMs. Please refer to Appendix~\ref{limitt} for detailed
Limitations and Future Work.

\begin{figure*}[h]
    \centering
    \includegraphics[width=0.6\textwidth]{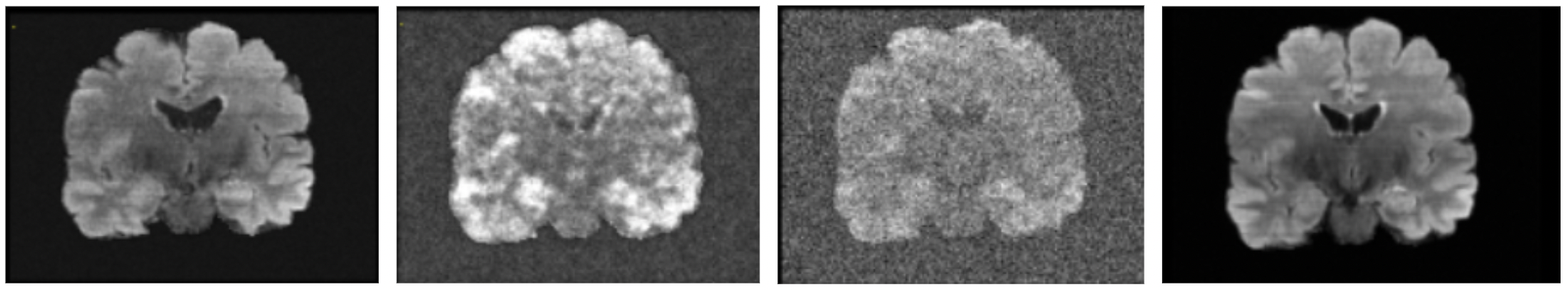}
    \caption{\small From left to right: \textsc{ManifoldCache}, FasterDiffusion, DeepCache, and Full Inference. \textsc{ManifoldCache} achieves visual quality most similar to Full Inference on Med-DDPM (3D Brain MRI).}
    \label{fig:med}
\end{figure*}

\section{Conclusion}
We introduced \textsc{ManifoldCache}, the first training-free acceleration framework designed specifically for Constraint-Manifold Diffusion Models (CMDMs). By exploiting the orthogonal decomposition of the diffusion score into manifold-normal and manifold-tangential components, \textsc{ManifoldCache} derives a principled safe-caching threshold directly from the noise schedule together with a depth-adaptive caching policy grounded in score geometry. The resulting method requires no retraining, no calibration data, and negligible runtime overhead. Across eight diverse CMDMs spanning medical imaging, molecular generation, protein design, crystal prediction, and multi-view 4D synthesis, \textsc{ManifoldCache} consistently achieves superior quality--speed trade-offs and often simultaneously improves both inference time and generation fidelity over strong existing accelerators. Our results suggest that constraint geometry is not only central to generation quality, but also a fundamental signal for determining when and where diffusion computation can be safely reused.


\bibliographystyle{plainnat}
\bibliography{neurips}

\newpage
\section*{NeurIPS Paper Checklist}

\begin{enumerate}

\item {\bf Claims}
    \item[] Question: Do the main claims made in the abstract and introduction accurately reflect the paper's contributions and scope?
    \item[] Answer: \answerYes{} 
    \item[] Justification: This can be verified by the abstract and introduction.
    \item[] Guidelines:
    \begin{itemize}
        \item The answer \answerNA{} means that the abstract and introduction do not include the claims made in the paper.
        \item The abstract and/or introduction should clearly state the claims made, including the contributions made in the paper and important assumptions and limitations. A \answerNo{} or \answerNA{} answer to this question will not be perceived well by the reviewers. 
        \item The claims made should match theoretical and experimental results, and reflect how much the results can be expected to generalize to other settings. 
        \item It is fine to include aspirational goals as motivation as long as it is clear that these goals are not attained by the paper. 
    \end{itemize}

\item {\bf Limitations}
    \item[] Question: Does the paper discuss the limitations of the work performed by the authors?
    \item[] Answer: \answerYes{}{} 
    \item[] Justification: {In the Appendix}
    \item[] Guidelines:
    \begin{itemize}
        \item The answer \answerNA{} means that the paper has no limitation while the answer \answerNo{} means that the paper has limitations, but those are not discussed in the paper. 
        \item The authors are encouraged to create a separate ``Limitations'' section in their paper.
        \item The paper should point out any strong assumptions and how robust the results are to violations of these assumptions (e.g., independence assumptions, noiseless settings, model well-specification, asymptotic approximations only holding locally). The authors should reflect on how these assumptions might be violated in practice and what the implications would be.
        \item The authors should reflect on the scope of the claims made, e.g., if the approach was only tested on a few datasets or with a few runs. In general, empirical results often depend on implicit assumptions, which should be articulated.
        \item The authors should reflect on the factors that influence the performance of the approach. For example, a facial recognition algorithm may perform poorly when image resolution is low or images are taken in low lighting. Or a speech-to-text system might not be used reliably to provide closed captions for online lectures because it fails to handle technical jargon.
        \item The authors should discuss the computational efficiency of the proposed algorithms and how they scale with dataset size.
        \item If applicable, the authors should discuss possible limitations of their approach to address problems of privacy and fairness.
        \item While the authors might fear that complete honesty about limitations might be used by reviewers as grounds for rejection, a worse outcome might be that reviewers discover limitations that aren't acknowledged in the paper. The authors should use their best judgment and recognize that individual actions in favor of transparency play an important role in developing norms that preserve the integrity of the community. Reviewers will be specifically instructed to not penalize honesty concerning limitations.
    \end{itemize}

\item {\bf Theory assumptions and proofs}
    \item[] Question: For each theoretical result, does the paper provide the full set of assumptions and a complete (and correct) proof?
    \item[] Answer: \answerYes{} 
    \item[] Justification: {In the appendix}
    \item[] Guidelines:
    \begin{itemize}
        \item The answer \answerNA{} means that the paper does not include theoretical results. 
        \item All the theorems, formulas, and proofs in the paper should be numbered and cross-referenced.
        \item All assumptions should be clearly stated or referenced in the statement of any theorems.
        \item The proofs can either appear in the main paper or the supplemental material, but if they appear in the supplemental material, the authors are encouraged to provide a short proof sketch to provide intuition. 
        \item Inversely, any informal proof provided in the core of the paper should be complemented by formal proofs provided in appendix or supplemental material.
        \item Theorems and Lemmas that the proof relies upon should be properly referenced. 
    \end{itemize}

    \item {\bf Experimental result reproducibility}
    \item[] Question: Does the paper fully disclose all the information needed to reproduce the main experimental results of the paper to the extent that it affects the main claims and/or conclusions of the paper (regardless of whether the code and data are provided or not)?
    \item[] Answer: \answerYes{} 
    \item[] Justification: {Implementation details added}
    \item[] Guidelines:
    \begin{itemize}
        \item The answer \answerNA{} means that the paper does not include experiments.
        \item If the paper includes experiments, a \answerNo{} answer to this question will not be perceived well by the reviewers: Making the paper reproducible is important, regardless of whether the code and data are provided or not.
        \item If the contribution is a dataset and\slash or model, the authors should describe the steps taken to make their results reproducible or verifiable. 
        \item Depending on the contribution, reproducibility can be accomplished in various ways. For example, if the contribution is a novel architecture, describing the architecture fully might suffice, or if the contribution is a specific model and empirical evaluation, it may be necessary to either make it possible for others to replicate the model with the same dataset, or provide access to the model. In general. releasing code and data is often one good way to accomplish this, but reproducibility can also be provided via detailed instructions for how to replicate the results, access to a hosted model (e.g., in the case of a large language model), releasing of a model checkpoint, or other means that are appropriate to the research performed.
        \item While NeurIPS does not require releasing code, the conference does require all submissions to provide some reasonable avenue for reproducibility, which may depend on the nature of the contribution. For example
        \begin{enumerate}
            \item If the contribution is primarily a new algorithm, the paper should make it clear how to reproduce that algorithm.
            \item If the contribution is primarily a new model architecture, the paper should describe the architecture clearly and fully.
            \item If the contribution is a new model (e.g., a large language model), then there should either be a way to access this model for reproducing the results or a way to reproduce the model (e.g., with an open-source dataset or instructions for how to construct the dataset).
            \item We recognize that reproducibility may be tricky in some cases, in which case authors are welcome to describe the particular way they provide for reproducibility. In the case of closed-source models, it may be that access to the model is limited in some way (e.g., to registered users), but it should be possible for other researchers to have some path to reproducing or verifying the results.
        \end{enumerate}
    \end{itemize}

\item {\bf Open access to data and code}
    \item[] Question: Does the paper provide open access to the data and code, with sufficient instructions to faithfully reproduce the main experimental results, as described in supplemental material?
    \item[] Answer: \answerYes{} 
    \item[] Justification: {Datasets are public, settings and splits are provided, code will be released upon acceptance}
    \item[] Guidelines:
    \begin{itemize}
        \item The answer \answerNA{} means that paper does not include experiments requiring code.
        \item Please see the NeurIPS code and data submission guidelines (\url{https://neurips.cc/public/guides/CodeSubmissionPolicy}) for more details.
        \item While we encourage the release of code and data, we understand that this might not be possible, so \answerNo{} is an acceptable answer. Papers cannot be rejected simply for not including code, unless this is central to the contribution (e.g., for a new open-source benchmark).
        \item The instructions should contain the exact command and environment needed to run to reproduce the results. See the NeurIPS code and data submission guidelines (\url{https://neurips.cc/public/guides/CodeSubmissionPolicy}) for more details.
        \item The authors should provide instructions on data access and preparation, including how to access the raw data, preprocessed data, intermediate data, and generated data, etc.
        \item The authors should provide scripts to reproduce all experimental results for the new proposed method and baselines. If only a subset of experiments are reproducible, they should state which ones are omitted from the script and why.
        \item At submission time, to preserve anonymity, the authors should release anonymized versions (if applicable).
        \item Providing as much information as possible in supplemental material (appended to the paper) is recommended, but including URLs to data and code is permitted.
    \end{itemize}

\item {\bf Experimental setting/details}
    \item[] Question: Does the paper specify all the training and test details (e.g., data splits, hyperparameters, how they were chosen, type of optimizer) necessary to understand the results?
    \item[] Answer: \answerYes{} 
    \item[] Justification: {Details provided in the main paper}
    \item[] Guidelines:
    \begin{itemize}
        \item The answer \answerNA{} means that the paper does not include experiments.
        \item The experimental setting should be presented in the core of the paper to a level of detail that is necessary to appreciate the results and make sense of them.
        \item The full details can be provided either with the code, in appendix, or as supplemental material.
    \end{itemize}

\item {\bf Experiment statistical significance}
    \item[] Question: Does the paper report error bars suitably and correctly defined or other appropriate information about the statistical significance of the experiments?
    \item[] Answer: \answerNA{} 
    \item[] Justification: {All existing methods in this category report numbers without error bars, so we followed them.}
    \item[] Guidelines:
    \begin{itemize}
        \item The answer \answerNA{} means that the paper does not include experiments.
        \item The authors should answer \answerYes{} if the results are accompanied by error bars, confidence intervals, or statistical significance tests, at least for the experiments that support the main claims of the paper.
        \item The factors of variability that the error bars are capturing should be clearly stated (for example, train/test split, initialization, random drawing of some parameter, or overall run with given experimental conditions).
        \item The method for calculating the error bars should be explained (closed form formula, call to a library function, bootstrap, etc.)
        \item The assumptions made should be given (e.g., Normally distributed errors).
        \item It should be clear whether the error bar is the standard deviation or the standard error of the mean.
        \item It is OK to report 1-sigma error bars, but one should state it. The authors should preferably report a 2-sigma error bar than state that they have a 96\% CI, if the hypothesis of Normality of errors is not verified.
        \item For asymmetric distributions, the authors should be careful not to show in tables or figures symmetric error bars that would yield results that are out of range (e.g., negative error rates).
        \item If error bars are reported in tables or plots, the authors should explain in the text how they were calculated and reference the corresponding figures or tables in the text.
    \end{itemize}

\item {\bf Experiments compute resources}
    \item[] Question: For each experiment, does the paper provide sufficient information on the computer resources (type of compute workers, memory, time of execution) needed to reproduce the experiments?
    \item[] Answer: \answerYes{} 
    \item[] Justification: {Provided in main paper}
    \item[] Guidelines:
    \begin{itemize}
        \item The answer \answerNA{} means that the paper does not include experiments.
        \item The paper should indicate the type of compute workers CPU or GPU, internal cluster, or cloud provider, including relevant memory and storage.
        \item The paper should provide the amount of compute required for each of the individual experimental runs as well as estimate the total compute. 
        \item The paper should disclose whether the full research project required more compute than the experiments reported in the paper (e.g., preliminary or failed experiments that didn't make it into the paper). 
    \end{itemize}
    
\item {\bf Code of ethics}
    \item[] Question: Does the research conducted in the paper conform, in every respect, with the NeurIPS Code of Ethics \url{https://neurips.cc/public/EthicsGuidelines}?
    \item[] Answer: \answerYes{} 
    \item[] Justification: The paper conforms to the Code of Ethics.
    \item[] Guidelines:
    \begin{itemize}
        \item The answer \answerNA{} means that the authors have not reviewed the NeurIPS Code of Ethics.
        \item If the authors answer \answerNo, they should explain the special circumstances that require a deviation from the Code of Ethics.
        \item The authors should make sure to preserve anonymity (e.g., if there is a special consideration due to laws or regulations in their jurisdiction).
    \end{itemize}

\item {\bf Broader impacts}
    \item[] Question: Does the paper discuss both potential positive societal impacts and negative societal impacts of the work performed?
    \item[] Answer: \answerNA{} 
    \item[] Justification: There is no societal impact of the work performed.
    \item[] Guidelines:
    \begin{itemize}
        \item The answer \answerNA{} means that there is no societal impact of the work performed.
        \item If the authors answer \answerNA{} or \answerNo, they should explain why their work has no societal impact or why the paper does not address societal impact.
        \item Examples of negative societal impacts include potential malicious or unintended uses (e.g., disinformation, generating fake profiles, surveillance), fairness considerations (e.g., deployment of technologies that could make decisions that unfairly impact specific groups), privacy considerations, and security considerations.
        \item The conference expects that many papers will be foundational research and not tied to particular applications, let alone deployments. However, if there is a direct path to any negative applications, the authors should point it out. For example, it is legitimate to point out that an improvement in the quality of generative models could be used to generate Deepfakes for disinformation. On the other hand, it is not needed to point out that a generic algorithm for optimizing neural networks could enable people to train models that generate Deepfakes faster.
        \item The authors should consider possible harms that could arise when the technology is being used as intended and functioning correctly, harms that could arise when the technology is being used as intended but gives incorrect results, and harms following from (intentional or unintentional) misuse of the technology.
        \item If there are negative societal impacts, the authors could also discuss possible mitigation strategies (e.g., gated release of models, providing defenses in addition to attacks, mechanisms for monitoring misuse, mechanisms to monitor how a system learns from feedback over time, improving the efficiency and accessibility of ML).
    \end{itemize}
    
\item {\bf Safeguards}
    \item[] Question: Does the paper describe safeguards that have been put in place for responsible release of data or models that have a high risk for misuse (e.g., pre-trained language models, image generators, or scraped datasets)?
    \item[] Answer: \answerNA{} 
    \item[] Justification: The paper poses no such risks.
    \item[] Guidelines:
    \begin{itemize}
        \item The answer \answerNA{} means that the paper poses no such risks.
        \item Released models that have a high risk for misuse or dual-use should be released with necessary safeguards to allow for controlled use of the model, for example by requiring that users adhere to usage guidelines or restrictions to access the model or implementing safety filters. 
        \item Datasets that have been scraped from the Internet could pose safety risks. The authors should describe how they avoided releasing unsafe images.
        \item We recognize that providing effective safeguards is challenging, and many papers do not require this, but we encourage authors to take this into account and make a best faith effort.
    \end{itemize}

\item {\bf Licenses for existing assets}
    \item[] Question: Are the creators or original owners of assets (e.g., code, data, models), used in the paper, properly credited and are the license and terms of use explicitly mentioned and properly respected?
    \item[] Answer: \answerYes{} 
    \item[] Justification: We have referenced the required papers.
    \item[] Guidelines:
    \begin{itemize}
        \item The answer \answerNA{} means that the paper does not use existing assets.
        \item The authors should cite the original paper that produced the code package or dataset.
        \item The authors should state which version of the asset is used and, if possible, include a URL.
        \item The name of the license (e.g., CC-BY 4.0) should be included for each asset.
        \item For scraped data from a particular source (e.g., website), the copyright and terms of service of that source should be provided.
        \item If assets are released, the license, copyright information, and terms of use in the package should be provided. For popular datasets, \url{paperswithcode.com/datasets} has curated licenses for some datasets. Their licensing guide can help determine the license of a dataset.
        \item For existing datasets that are re-packaged, both the original license and the license of the derived asset (if it has changed) should be provided.
        \item If this information is not available online, the authors are encouraged to reach out to the asset's creators.
    \end{itemize}

\item {\bf New assets}
    \item[] Question: Are new assets introduced in the paper well documented and is the documentation provided alongside the assets?
    \item[] Answer: \answerNA{} 
    \item[] Justification: {There are no new assets.}
    \item[] Guidelines:
    \begin{itemize}
        \item The answer \answerNA{} means that the paper does not release new assets.
        \item Researchers should communicate the details of the dataset\slash code\slash model as part of their submissions via structured templates. This includes details about training, license, limitations, etc. 
        \item The paper should discuss whether and how consent was obtained from people whose asset is used.
        \item At submission time, remember to anonymize your assets (if applicable). You can either create an anonymized URL or include an anonymized zip file.
    \end{itemize}

\item {\bf Crowdsourcing and research with human subjects}
    \item[] Question: For crowdsourcing experiments and research with human subjects, does the paper include the full text of instructions given to participants and screenshots, if applicable, as well as details about compensation (if any)? 
    \item[] Answer: \answerNA{} 
    \item[] Justification: The paper does not involve crowdsourcing.
    \item[] Guidelines: 
    \begin{itemize}
        \item The answer \answerNA{} means that the paper does not involve crowdsourcing nor research with human subjects.
        \item Including this information in the supplemental material is fine, but if the main contribution of the paper involves human subjects, then as much detail as possible should be included in the main paper. 
        \item According to the NeurIPS Code of Ethics, workers involved in data collection, curation, or other labor should be paid at least the minimum wage in the country of the data collector. 
    \end{itemize}

\item {\bf Institutional review board (IRB) approvals or equivalent for research with human subjects}
    \item[] Question: Does the paper describe potential risks incurred by study participants, whether such risks were disclosed to the subjects, and whether Institutional Review Board (IRB) approvals (or an equivalent approval/review based on the requirements of your country or institution) were obtained?
    \item[] Answer: \answerNA{} 
    \item[] Justification: The paper does not involve crowdsourcing.
    \item[] Guidelines:
    \begin{itemize}
        \item The answer \answerNA{} means that the paper does not involve crowdsourcing nor research with human subjects.
        \item Depending on the country in which research is conducted, IRB approval (or equivalent) may be required for any human subjects research. If you obtained IRB approval, you should clearly state this in the paper. 
        \item We recognize that the procedures for this may vary significantly between institutions and locations, and we expect authors to adhere to the NeurIPS Code of Ethics and the guidelines for their institution. 
        \item For initial submissions, do not include any information that would break anonymity (if applicable), such as the institution conducting the review.
    \end{itemize}

\item {\bf Declaration of LLM usage}
    \item[] Question: Does the paper describe the usage of LLMs if it is an important, original, or non-standard component of the core methods in this research? Note that if the LLM is used only for writing, editing, or formatting purposes and does \emph{not} impact the core methodology, scientific rigor, or originality of the research, declaration is not required.
    \item[] Answer: \answerYes{} 
    \item[] Justification: {Used for writing and editing etc.}
    \item[] Guidelines:
    \begin{itemize}
        \item The answer \answerNA{} means that the core method development in this research does not involve LLMs as any important, original, or non-standard components.
        \item Please refer to our LLM policy in the NeurIPS handbook for what should or should not be described.
    \end{itemize}

\end{enumerate}

\newpage
\appendix
\section*{Appendix}
\section{Theoretical Analysis}

\subsection{Notation for Proofs}
\label{app:notation}

We work with the two-mode Gaussian mixture of Assumption~(A3).
Let $\mu_1,\mu_2\in\Mc$ be the two mode centres with separation
$\delta_c:=\|\mu_1-\mu_2\|>0$, unit intermode axis
$u:=(\mu_2-\mu_1)/\delta_c$, and midpoint $\bar\mu:=(\mu_1+\mu_2)/2$.
Adopting centred coordinates, $\langle\mu_1,u\rangle=-\delta_c/2$
and $\langle\mu_2,u\rangle=\delta_c/2$.
The within-mode covariance is $\Sigma_c^{\mathrm{in}}:=P_u^\perp+\sigma_u^2 P_u$,
where $P_u:=uu^\top$ and $P_u^\perp:=\mathbf{I}_{d_c}-P_u$, so
$\Sigma_c^{\mathrm{in}}$ has eigenvalue $\sigma_u^2$ along $u$ and
eigenvalue $1$ in the remaining $d_c-1$ tangential directions.
The total marginal covariance of the equal-weight mixture is
$\Sigma_c:=P_u^\perp+\Lambda_c P_u$ with
$\Lambda_c:=\sigma_u^2+\delta_c^2/4$, so $\operatorname{tr}\Sigma_c=(d_c-1)+\Lambda_c$
and the eigenvalues of $\Sigma_c$ are $1$ (multiplicity $d_c-1$) and
$\Lambda_c$ (multiplicity 1).

We use the following quantities throughout the proofs.
$V_\tau^u:=(1-\abar_\tau)+\abar_\tau\sigma_u^2$ is the marginal variance
of $x_{\tau,u}:=\langle x_\tau,u\rangle$ given either mode, arising
from $x_\tau=\sqrt{\abar_\tau}x_0+\sqrt{1-\abar_\tau}\varepsilon$
with $\varepsilon\sim\mathcal{N}(0,\mathbf{I})$.
$\Phi$ denotes the standard Gaussian CDF.
$M_t:=\widehat{x}_0(x_t,t;c)=\mathbb{E}[x_0\mid x_t,c]$ is the
Tweedie posterior mean.
$\pi_k(x_\tau):=\Pr[\text{mode }k\mid x_\tau]$ is the posterior mode
probability.
$\Pi_{\Mc}(x)\in\{\mu_1,\mu_2\}$ denotes nearest-mode projection.
In centred coordinates $\|x-\mu_1\|^2-\|x-\mu_2\|^2=2\delta_c\langle x,u\rangle$,
so $\Pi_{\Mc}(x)=\mu_2$ if $\langle x,u\rangle\ge 0$ and
$\Pi_{\Mc}(x)=\mu_1$ otherwise (ties, which have probability zero, are
assigned to $\mu_2$).
$K\in\{1,2\}$ denotes the mode from which $x_0$ is drawn.

\subsection{Standing Assumptions}
\label{app:assump}

\textbf{(A1) Network regularity.}
Each block $f_\ell$ is $C^2$ along the sampling trajectory with
$\|J_\ell\|_{\mathrm{op}}\le K_\ell$ and
$\|\nabla^2 h_\ell\|_{\mathrm{op}}\le M_\ell$ a.s., where
$J_\ell:=\nabla_x h_\ell$.

\textbf{(A2) Manifold regularity.}
$\Mc$ is a closed $C^2$ submanifold of $\Xc$ with positive reach
$\rho_c>0$ (guaranteed by Definition~\ref{def:CMDM} when $g$ is
$C^2$ with the submersion condition holding).

\textbf{(A3) Two-mode Gaussian mixture.}
$p(x_0\mid c)=\tfrac{1}{2}\mathcal{N}(\mu_1,\Sigma_c^{\mathrm{in}})+
\tfrac{1}{2}\mathcal{N}(\mu_2,\Sigma_c^{\mathrm{in}})$
(notation as in \S\ref{app:notation}), with the following conditions:
(i) $0<\sigma_u\le 1$;
(ii) $\Lambda_c>1$ (equivalently $\delta_c>2\sqrt{1-\sigma_u^2}$),
ensuring all eigenvalues of $\Sigma_c$ satisfy $\lambda_i\ge 1$;
(iii) $\delta_c>0$.

\textbf{(A4) ODE remainder.}
$\mathbb{E}\|R(\Delta t)\|^4\le C_R\Delta t^8$ for the Euler
remainder $R(\Delta t)$ of the probability-flow ODE (the local error of
an Euler step is $O(\Delta t^2)$).

\textbf{(A5) Directional Lipschitz constants and depth ordering.}
For each block $\ell$ and $t\le T^*$, there exist $L_\ell^\perp,
L_\ell^\parallel>0$ with $\|J_\ell P^\perp_x v\|\le L_\ell^\perp\|v\|$
for $v\in N_{\Pi(x)}\Mc$ and $\|J_\ell P^\parallel_x w\|\le L_\ell^\parallel\|w\|$
for $w\in T_{\Pi(x)}\Mc$.
Under unit-gain normalisation $L_\ell^\perp+L_\ell^\parallel=1$, set
$\beta_\ell:=L_\ell^\perp$ and $L_\ell^\parallel=1-\beta_\ell$.
The depth-ordering condition $\beta_{\ell_1}<\beta_{\ell_2}$ for
$\ell_1<\ell_2$ holds throughout the safe phase.

\textbf{(A5$'$) Transversality} (for Theorem~\ref{thm:depth}, two-sided
bound only).
There exist $\theta_*\in(0,\pi/2]$, $\zeta\in(0,\cos\theta_*]$, and
$\gamma_*>0$ such that for all non-zero $J_\ell v$, $J_\ell w$ with
$v\in N_{\Pi(x)}\Mc$, $w\in T_{\Pi(x)}\Mc$:
$\langle J_\ell v,J_\ell w\rangle/(\|J_\ell v\|\|J_\ell w\|)\ge-\cos\theta_*+\zeta$,
so $\alpha:=\cos\theta_*-\zeta\ge 0$.
Additionally: $\|J_\ell v\|\ge\underline{L}_\ell^\perp\|v\|$ and
$\|J_\ell w\|\ge\underline{L}_\ell^\parallel\|w\|$, with
$\gamma_*:=\min(\underline{L}_\ell^\perp/L_\ell^\perp,\,\underline{L}_\ell^\parallel/L_\ell^\parallel)>0$.
The constraint $\zeta\le\cos\theta_*$ (i.e., $\alpha\ge 0$) is necessary
for the AM-GM step in the lower bound (Step~6).

\textbf{(A6) Temporal regularity.}
$\mathbb{E}\,\delta_\ell(t_1)^2\le\eta\,\mathbb{E}\,\delta_\ell(t_2)^2$
for all $t_1,t_2$ within any cache stride window, for some $\eta\ge 1$.

\subsection{Lemma 1: Exact Posterior Variance}
\label{lem:posterior-var-sec}

\begin{lemma}[MMSE upper bound]
\label{lem:posterior-var}
For any prior $p(x_0\mid c)$ with mean zero and covariance $\Sigma_c$
having eigenvalues $\{\lambda_i\}_{i=1}^{d_c}$, and the VP forward
kernel $q(x_\tau\mid x_0)=\mathcal{N}(\sqrt{\abar_\tau}x_0,(1-\abar_\tau)\mathbf{I})$,
the Tweedie estimate $M_\tau=\mathbb{E}[x_0\mid x_\tau]$ satisfies
\[
  \mathbb{E}\|M_\tau-x_0\|^2
  \;\le\;\sum_{i=1}^{d_c}
  \frac{(1-\abar_\tau)\lambda_i}{\abar_\tau\lambda_i+1-\abar_\tau}.
\]
\end{lemma}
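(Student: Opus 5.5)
The approach is to compare the Bayes-optimal estimator $M_\tau=\mathbb{E}[x_0\mid x_\tau]$ with the best \emph{linear} estimator of $x_0$ from $x_\tau$. Since $M_\tau$ minimises $\mathbb{E}\|f(x_\tau)-x_0\|^2$ over all measurable $f$, its mean-square error is at most that of any linear map $x_\tau\mapsto Ax_\tau$. For a Gaussian prior the linear estimator coincides with the posterior mean, so the bound is tight in that case. For non-Gaussian priors, such as the two-mode mixture of (A3), we only need the inequality.

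First, I will record the second moments. The forward model is $x_\tau=\sqrt{\abar_\tau}\,x_0+\sqrt{1-\abar_\tau}\,\varepsilon$, where $\varepsilon\sim\mathcal{N}(0,\mathbf{I})$ is independent of $x_0$, $\mathbb{E}x_0=0$, and $\operatorname{Cov}x_0=\Sigma_c$. These give
\[
\operatorname{Cov}(x_\tau)=\abar_\tau\Sigma_c+(1-\abar_\tau)\mathbf{I},\qquad \operatorname{Cov}(x_0,x_\tau)=\sqrt{\abar_\tau}\,\Sigma_c .
\]
Only these second moments are used, so no Gaussianity of $x_0$ is required. The quantities $x_0$ and $x_\tau$ are taken in the $d_c$-dimensional tangential coordinates where $\Sigma_c$ lives, consistent with the theorem's restriction to $T\Mc$.

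Next, I take the LMMSE matrix
\[
A^\star=\sqrt{\abar_\tau}\,\Sigma_c\bigl(\abar_\tau\Sigma_c+(1-\abar_\tau)\mathbf{I}\bigr)^{-1}.
\]
Its error covariance is $\Sigma_c-\abar_\tau\Sigma_c(\abar_\tau\Sigma_c+(1-\abar_\tau)\mathbf{I})^{-1}\Sigma_c$. Because every matrix involved is a function of $\Sigma_c$, all of them commute and are simultaneously diagonalised in the eigenbasis of $\Sigma_c$. On eigenvalue $\lambda_i$ the error is
\[
\lambda_i-\frac{\abar_\tau\lambda_i^2}{\abar_\tau\lambda_i+1-\abar_\tau}=\frac{(1-\abar_\tau)\lambda_i}{\abar_\tau\lambda_i+1-\abar_\tau}.
\]
Taking the trace gives $\mathbb{E}\|A^\star x_\tau-x_0\|^2$ equal to the stated sum. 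Combining this with the optimality of the conditional mean (the orthogonality principle, $\mathbb{E}\|f-x_0\|^2=\mathbb{E}\|f-M_\tau\|^2+\mathbb{E}\|M_\tau-x_0\|^2$) yields the lemma.

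The only delicate point, which I expect to be the main obstacle, is dimensional bookkeeping. In ambient space, noise is also added in the $k$ normal directions. Under a prior supported on $\Mc$, those coordinates contribute error that depends on how $x_0$ is parametrised. I would handle this by stating the lemma, as written, for the $d_c$-dimensional tangential coordinates of the prior. In the affine-flat two-mode setting of (A3), the normal coordinates of $x_0$ are deterministic, so their posterior error is zero.

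Finally, for the follow-up used in Theorem~\ref{thm:threshold}, I will note that when $\abar_\tau\ge\tfrac12$ and $\lambda_i\ge1$, each summand is at most $\lambda_i/2$. Indeed, the condition $(1-\abar_\tau)\le\tfrac12(\abar_\tau\lambda_i+1-\abar_\tau)$ is equivalent to $1-\abar_\tau\le\abar_\tau\lambda_i$, which holds whenever $\abar_\tau\ge\tfrac12$ and $\lambda_i\ge1$. Summing over $i$ gives the bound $\tfrac12\trS$.
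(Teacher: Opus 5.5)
Your proposal is correct and follows essentially the paper's proof: bound the MMSE of $M_\tau$ by the error of the linear MMSE estimator, then evaluate that error in the eigenbasis of $\Sigma_c$, which gives $(1-\bar\alpha_\tau)\lambda_i/(\bar\alpha_\tau\lambda_i+1-\bar\alpha_\tau)$ per eigendirection. The paper phrases this as a coordinatewise scalar LMMSE after rotating into the eigenbasis, which is valid because the noise is isotropic, and this coincides with your matrix estimator $A^\star$ because all the covariances involved are simultaneously diagonal in that basis.
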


\begin{proof}
The nonlinear MMSE estimator $M_\tau=\mathbb{E}[x_0\mid x_\tau]$
achieves lower mean-squared error than any linear estimator.
Work in the eigenbasis of $\Sigma_c$.
The linear MMSE estimator of $x_0^{(i)}$ from
$x_\tau^{(i)}=\sqrt{\abar_\tau}x_0^{(i)}+\sqrt{1-\abar_\tau}\varepsilon^{(i)}$
has MSE
$\mathrm{Cov}(x_0^{(i)})-\mathrm{Cov}(x_0^{(i)},x_\tau^{(i)})^2/\mathrm{Var}(x_\tau^{(i)})
=\lambda_i-\abar_\tau\lambda_i^2/(\abar_\tau\lambda_i+1-\abar_\tau)
=(1-\abar_\tau)\lambda_i/(\abar_\tau\lambda_i+1-\abar_\tau)$,
using $\mathrm{Cov}(x_0^{(i)},x_\tau^{(i)})=\sqrt{\abar_\tau}\lambda_i$ and
$\mathrm{Var}(x_\tau^{(i)})=\abar_\tau\lambda_i+1-\abar_\tau$.
Since $\mathbb{E}\|M_\tau-x_0\|^2\le\mathbb{E}\|\hat{x}_{0,\mathrm{lin}}-x_0\|^2
=\sum_i(1-\abar_\tau)\lambda_i/(\abar_\tau\lambda_i+1-\abar_\tau)$, the result follows.
\end{proof}

\subsection{Proof of Theorem~\ref{thm:threshold}}
\label{sec:proof-t1}

\textbf{Setup.}
Freezing the sampler at cache time $\tau$ yields output
$x_0^\tau=M_\tau:=\widehat{x}_0(x_\tau,\tau;c)$.
Since $M_t=\mathbb{E}[x_0\mid x_t,c]$ is a reverse-time martingale
(by the tower property: $\mathbb{E}[M_s\mid x_t]=\mathbb{E}[x_0\mid x_t]=M_t$
for $s<t$) with $M_0=x_0$ a.s., the cache error is
$\mathcal{E}(\tau)=\mathbb{E}\|M_\tau-x_0\|^2$.

\textbf{Part (i): Safe regime $\abar_\tau\ge\tfrac{1}{2}$.}
By Lemma~\ref{lem:posterior-var} (MMSE $\le$ linear MMSE, valid for any prior with covariance $\Sigma_c$):
$\mathcal{E}(\tau)\le\sum_i(1-\abar_\tau)\lambda_i/(\abar_\tau\lambda_i+1-\abar_\tau)$.
Since $\lambda_i\ge 1$ (A3.ii), we have
$\abar_\tau\lambda_i+1-\abar_\tau\ge 1$, so each term is bounded by
$(1-\abar_\tau)\lambda_i$, giving
\begin{equation}
  \mathcal{E}(\tau)\;\le\;(1-\abar_\tau)\trS\;\le\;\tfrac{1}{2}\trS,
  \label{eq:safe-err}
\end{equation}
where the last step uses $1-\abar_\tau\le\tfrac{1}{2}$.

\textbf{Part (ii): Mode-flip probability.}
Fix $\tau\ge 1$, so that $\abar_\tau\in(0,1)$.
In centred coordinates ($\mu_1=-\frac{\delta_c}{2}u$,
$\mu_2=\frac{\delta_c}{2}u$), write $x_{\tau,u}:=\langle x_\tau,u\rangle$.
The marginal distribution of $x_{\tau,u}$ given mode $k$ is
$\mathcal{N}(\sqrt{\abar_\tau}\langle\mu_k,u\rangle, V_\tau^u)$.

\emph{Step 1: Log-posterior ratio.}
Under (A3), $\Sigma_c^{\mathrm{in}}$ is block-diagonal with respect to
$u$ and its orthogonal complement, the component of $x_0$ orthogonal to
$u$ has the same distribution under both modes, and the forward noise
is isotropic. Hence the two mode likelihoods differ only through
$x_{\tau,u}$. Since the two mode-conditional marginals of $x_{\tau,u}$
are Gaussian with the same variance $V_\tau^u$ and means differing by
$\sqrt{\abar_\tau}\delta_c$:
\[
  \log\frac{\pi_2(x_\tau)}{\pi_1(x_\tau)}
  \;=\;\log\frac{p(x_{\tau,u}\mid\text{mode }2)}{p(x_{\tau,u}\mid\text{mode }1)}
  \;=\;\frac{\sqrt{\abar_\tau}\delta_c}{V_\tau^u}\,x_{\tau,u},
\]
so $\pi_2\ge\tfrac{1}{2}$ if and only if $x_{\tau,u}\ge 0$.

\emph{Step 2: Tweedie flip (Lemma 2).}

\begin{lemma}[Tweedie mode flip]
\label{lem:tweedie-flip}
$\langle M_\tau,u\rangle\ge 0$ if and only if $x_{\tau,u}\ge 0$.
Hence $\Pi_{\Mc}(M_\tau)=\mu_2$ if $x_{\tau,u}\ge 0$ and
$\Pi_{\Mc}(M_\tau)=\mu_1$ otherwise.
\end{lemma}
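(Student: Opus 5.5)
We need to show that the sign of $\langle M_\tau,u\rangle$ matches the sign of $x_{\tau,u}$. The plan is to compute $\langle M_\tau,u\rangle$ explicitly under (A3) by conditioning on the mode. By the tower property,
\[
M_\tau=\pi_1(x_\tau)\,\mathbb{E}[x_0\mid x_\tau,K=1]+\pi_2(x_\tau)\,\mathbb{E}[x_0\mid x_\tau,K=2].
\]
We then project onto $u$. Within mode $k$ the prior is Gaussian, and $\Sigma_c^{\mathrm{in}}$ is block-diagonal with respect to $u$ and its complement. So the $u$-coordinate of the mode-conditional posterior mean depends only on $x_{\tau,u}$, and it is given by the scalar Gaussian (linear) formula
\[
\langle \mathbb{E}[x_0\mid x_\tau,K=k],u\rangle
= m_k+\frac{\sqrt{\abar_\tau}\,\sigma_u^2}{V_\tau^u}\bigl(x_{\tau,u}-\sqrt{\abar_\tau}\,m_k\bigr),
\qquad m_k:=\langle\mu_k,u\rangle .
\]
Simplifying, the first term becomes $m_k(1-\abar_\tau)/V_\tau^u$.

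Substituting $m_1=-\delta_c/2$ and $m_2=\delta_c/2$ and combining with the posterior weights gives
\[
\langle M_\tau,u\rangle=\frac{\sqrt{\abar_\tau}\,\sigma_u^2}{V_\tau^u}\,x_{\tau,u}
+\frac{(1-\abar_\tau)\,\delta_c}{2V_\tau^u}\,\bigl(\pi_2(x_\tau)-\pi_1(x_\tau)\bigr).
\]
Step 1 gives $\pi_2/\pi_1=\exp(\sqrt{\abar_\tau}\delta_c x_{\tau,u}/V_\tau^u)$. Hence
\[
\pi_2-\pi_1=\tanh\!\bigl(\sqrt{\abar_\tau}\,\delta_c\,x_{\tau,u}/(2V_\tau^u)\bigr),
\]
which is an odd, strictly increasing function of $x_{\tau,u}$. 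Both summands therefore have the sign of $x_{\tau,u}$. The first coefficient is strictly positive because $\sigma_u>0$, $\abar_\tau>0$ and $V_\tau^u>0$. The second coefficient is nonnegative, and is in fact positive since $\abar_\tau<1$ for $\tau\ge1$.

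It follows that $\langle M_\tau,u\rangle$ is a strictly increasing odd function of $x_{\tau,u}$, so it is $\ge 0$ exactly when $x_{\tau,u}\ge 0$. The tie case $x_{\tau,u}=0$ gives $\langle M_\tau,u\rangle=0$, which the tie convention in \S\ref{app:notation} assigns to $\mu_2$. The second claim of the lemma then follows directly from the nearest-mode rule $\Pi_{\Mc}(x)=\mu_2$ iff $\langle x,u\rangle\ge0$.

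The main obstacle is justifying that the $u$-component of the mode-conditional posterior mean involves only $x_{\tau,u}$. Since the forward noise is isotropic and the prior covariance is block-diagonal in $(u,u^\perp)$, the joint Gaussian of $(x_0,x_\tau)$ given $K=k$ factorizes across these blocks, and the orthogonal coordinates decouple. The rest of the argument is bookkeeping of the scalar linear-Gaussian regression coefficients.
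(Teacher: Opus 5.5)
Your proposal is correct and follows essentially the same route as the paper. Like the paper, you write $M_\tau=\sum_k\pi_k\,\mathbb{E}[x_0\mid x_\tau,K=k]$ and compute the $u$-component of each mode-conditional mean by scalar Gaussian regression, justified by the block-diagonal $\Sigma_c^{\mathrm{in}}$ and isotropic noise. The only difference is the final sign argument: you combine the terms into $\frac{\sqrt{\abar_\tau}\sigma_u^2}{V_\tau^u}x_{\tau,u}+\frac{(1-\abar_\tau)\delta_c}{2V_\tau^u}(\pi_2-\pi_1)$ with $\pi_2-\pi_1$ a $\tanh$ of $x_{\tau,u}$, which also yields strict monotonicity, whereas the paper bounds $\langle M_\tau,u\rangle$ on each side of zero using $\pi_2\ge\tfrac12$ or $\pi_1>\tfrac12$.
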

\begin{proof}
The Tweedie formula gives $M_\tau=\pi_1 m_1+\pi_2 m_2$, where
$m_k:=\mathbb{E}[x_0\mid x_\tau,\text{mode }k]$.
By the independence noted in Step~1,
$\langle m_k,u\rangle=\mathbb{E}[\langle x_0,u\rangle\mid x_{\tau,u},\text{mode }k]$,
and the Gaussian regression formula
($\operatorname{Cov}(x_{0,u},x_{\tau,u})=\sqrt{\abar_\tau}\sigma_u^2$,
$\operatorname{Var}(x_{\tau,u})=V_\tau^u$) gives
\[
  \langle m_k,u\rangle
  \;=\;\langle\mu_k,u\rangle
    +\frac{\sqrt{\abar_\tau}\sigma_u^2}{V_\tau^u}
    (x_{\tau,u}-\sqrt{\abar_\tau}\langle\mu_k,u\rangle)
  \;=\;\frac{\sqrt{\abar_\tau}\sigma_u^2\,x_{\tau,u}
             +(1-\abar_\tau)\langle\mu_k,u\rangle}{V_\tau^u}.
\]
Hence $\langle m_2-m_1,u\rangle=(1-\abar_\tau)\delta_c/V_\tau^u>0$,
$\langle m_1,u\rangle=[\sqrt{\abar_\tau}\sigma_u^2\,x_{\tau,u}
-(1-\abar_\tau)\delta_c/2]/V_\tau^u$, and
$\langle m_2,u\rangle=[\sqrt{\abar_\tau}\sigma_u^2\,x_{\tau,u}
+(1-\abar_\tau)\delta_c/2]/V_\tau^u$.
On $\{x_{\tau,u}\ge 0\}$, we have $\pi_2\ge\tfrac{1}{2}$, so:
\begin{align*}
  \langle M_\tau,u\rangle
  &=\langle m_1,u\rangle+\pi_2\langle m_2-m_1,u\rangle\\
  &\ge\frac{\sqrt{\abar_\tau}\sigma_u^2\,x_{\tau,u}
            -(1-\abar_\tau)\delta_c/2}{V_\tau^u}
   +\frac{(1-\abar_\tau)\delta_c}{2V_\tau^u}
  \;=\;\frac{\sqrt{\abar_\tau}\sigma_u^2\,x_{\tau,u}}{V_\tau^u}\;\ge\;0.
\end{align*}
On $\{x_{\tau,u}<0\}$, we have $\pi_1>\tfrac{1}{2}$, so:
\begin{align*}
  \langle M_\tau,u\rangle
  &=\langle m_2,u\rangle-\pi_1\langle m_2-m_1,u\rangle\\
  &<\frac{\sqrt{\abar_\tau}\sigma_u^2\,x_{\tau,u}
            +(1-\abar_\tau)\delta_c/2}{V_\tau^u}
   -\frac{(1-\abar_\tau)\delta_c}{2V_\tau^u}
  \;=\;\frac{\sqrt{\abar_\tau}\sigma_u^2\,x_{\tau,u}}{V_\tau^u}\;<\;0.
\end{align*}
The last statement follows from the characterisation of $\Pi_{\Mc}$
in \S\ref{app:notation}.
\end{proof}

\emph{Step 3: Closed form.}
By Lemma~\ref{lem:tweedie-flip}, the frozen estimate is closer to the
wrong mode exactly when $x_{\tau,u}$ falls on the wrong side of zero.
Given mode 1, $x_{\tau,u}\sim\mathcal{N}(-\sqrt{\abar_\tau}\delta_c/2,\,V_\tau^u)$, so
\[
  \Pr[x_{\tau,u}\ge 0\mid K=1]
  \;=\;\Phi\Bigl(-\frac{\sqrt{\abar_\tau}\,\delta_c}{2\sqrt{V_\tau^u}}\Bigr),
\]
and by symmetry $\Pr[x_{\tau,u}<0\mid K=2]$ takes the same value, which
proves~\eqref{eq:pflip}.
Since $V_\tau^u=1-\abar_\tau(1-\sigma_u^2)$, the derivative of
$\abar_\tau/V_\tau^u$ with respect to $\abar_\tau$ is $(V_\tau^u)^{-2}>0$,
so $p_{\mathrm{flip}}$ is continuous and strictly decreasing in
$\abar_\tau$.

\emph{Safe regime ($\abar_\tau\ge\tfrac{1}{2}$).}
Since $\sigma_u\le 1$ (A3.i), $V_\tau^u\le 1$, hence
\[
  \sqrt{\abar_\tau/V_\tau^u}\;\ge\;\sqrt{\abar_\tau}\;\ge\;1/\sqrt{2}
  \quad\text{and}\quad
  p_{\mathrm{flip}}(\tau)\;\le\;\Phi(-\delta_c\sqrt{\abar_\tau}/2)
  \;\le\;\Phi(-\delta_c/(2\sqrt{2})).
\]

\emph{Unsafe regime ($\abar_\tau<\tfrac{1}{2}$).}
Here $V_\tau^u-\abar_\tau=1-2\abar_\tau+\abar_\tau\sigma_u^2>0$, so
$\sqrt{\abar_\tau/V_\tau^u}<1$ and
$p_{\mathrm{flip}}(\tau)>\Phi(-\delta_c/2)=p_{\mathrm{mix}}$.
This bound does not depend on $\abar_\tau$, so it does not vanish as
$\abar_\tau\uparrow\tfrac{1}{2}$.

\textbf{Part (iii): Role of $T^*$.}
Since $\Phi$ is strictly increasing,
$p_{\mathrm{flip}}(\tau)\ge p_{\mathrm{mix}}$ if and only if
$\abar_\tau\le V_\tau^u$, that is, $\abar_\tau(2-\sigma_u^2)\le 1$, or
$\abar_\tau\le 1/(2-\sigma_u^2)$.
For $\sigma_u\in(0,1]$ we have $1/(2-\sigma_u^2)>\tfrac{1}{2}$, and
$1/(2-\sigma_u^2)\to\tfrac{1}{2}$ as $\sigma_u\to 0$.
Hence the condition holds for every $\sigma_u\in(0,1]$ if and only if
$\abar_\tau\le\tfrac{1}{2}$. \hfill$\square$

\subsection{Proof of Theorem~\ref{thm:depth}}
\label{sec:proof-t2}

\textbf{Step 1: Linearise block drift.}
By Taylor's theorem and (A1):
$h_\ell(x_{t'},t,c)-h_\ell(x_t,t,c)=J_\ell\Delta x+E_\ell$,
with $\Delta x:=x_{t'}-x_t$ and $\|E_\ell\|\le\tfrac{M_\ell}{2}\|\Delta x\|^2=O(\Delta t^2)$.
Squaring, taking expectations, and bounding the cross term via
Cauchy--Schwarz (which gives $O(\Delta t^3)$):
\begin{equation}
  \mathbb{E}\,\delta_\ell(t)^2=\mathbb{E}\|J_\ell\Delta x\|^2+O(\Delta t^3).
  \label{eq:lin}
\end{equation}

\textbf{Step 2: Decompose the Euler step.}
The probability-flow ODE advances as
$x_{t'}=x_t+\Delta t\,A_t s_t(x_t;c)+R(\Delta t)$,
so $\Delta x=\Delta t A_t s_t+R(\Delta t)$.
Using $s_t=s_t^\perp+s_t^\parallel$ and the projectors:
$P^\perp_x\Delta x=\Delta t A_t s_t^\perp+P^\perp_x R$ and
$P^\parallel_x\Delta x=\Delta t A_t s_t^\parallel+P^\parallel_x R$.
With $\Theta_t:=A_t^2\mathbb{E}\|s_t^\parallel\|^2>0$ and
$\kappa_t:=\mathbb{E}\|s_t^\perp\|^2/\mathbb{E}\|s_t^\parallel\|^2$,
Assumption~(A4) gives $\mathbb{E}\|P^\perp_x R\|^2,\mathbb{E}\|P^\parallel_x R\|^2=O(\Delta t^4)$, so:
\begin{equation}
  \mathbb{E}\|P^\perp_x\Delta x\|^2=\Delta t^2\Theta_t\kappa_t+O(\Delta t^3),\quad
  \mathbb{E}\|P^\parallel_x\Delta x\|^2=\Delta t^2\Theta_t+O(\Delta t^3).
  \label{eq:decomp}
\end{equation}

\textbf{Step 3: Upper bound on drift.}
By (A5) and the triangle inequality:
$\|J_\ell\Delta x\|\le L_\ell^\perp\|P^\perp_x\Delta x\|+L_\ell^\parallel\|P^\parallel_x\Delta x\|$.
Using $(a+b)^2\le 2(a^2+b^2)$, taking expectations, and
substituting~\eqref{eq:decomp} into~\eqref{eq:lin}:
\begin{equation}
  \mathbb{E}\,\delta_\ell(t)^2
  \;\le\;\overline{B}_\ell\,\Delta t^2\Theta_t+O(\Delta t^3),\quad
  \overline{B}_\ell:=2[(L_\ell^\perp)^2\kappa_t+(L_\ell^\parallel)^2]
  =2B(\beta_\ell;\kappa_t).
  \label{eq:Bupp}
\end{equation}

\textbf{Step 4: Certified strides.}
Let $\overline{D}_\ell:=\bigl(\overline{B}_\ell\Delta t^2\Theta_t+O(\Delta t^3)\bigr)^{1/2}
=\Delta t\sqrt{\overline{B}_\ell\Theta_t}\,\bigl(1+O(\Delta t)\bigr)$
be the square root of the right-hand side of~\eqref{eq:Bupp}.
For every step $t-k\Delta t$ in a cache window starting at $t$,
Jensen's inequality and (A6) give
$\mathbb{E}\,\delta_\ell(t-k\Delta t)\le\bigl(\mathbb{E}\,\delta_\ell(t-k\Delta t)^2\bigr)^{1/2}
\le\sqrt{\eta}\,\overline{D}_\ell$.
Hence every stride $S$ with $(S-1)\sqrt{\eta}\,\overline{D}_\ell\le\varepsilon$
is $\varepsilon$-cacheable, and the certified stride
$\widehat{S}_\ell:=1+\varepsilon/(\sqrt{\eta}\,\overline{D}_\ell)$
satisfies $\lfloor\widehat{S}_\ell\rfloor\le S^\star_\ell$, where
$S^\star_\ell$ is the largest $\varepsilon$-cacheable stride.

\textbf{Step 5: Depth-monotone comparison.}
For $\ell_1<\ell_2$ with $\beta_{\ell_1}<\beta_{\ell_2}$ (A5):
\[
  \frac{\widehat{S}_{\ell_2}-1}{\widehat{S}_{\ell_1}-1}
  =\frac{\overline{D}_{\ell_1}}{\overline{D}_{\ell_2}}
  =\sqrt{\frac{B(\beta_{\ell_1};\kappa_t)}{B(\beta_{\ell_2};\kappa_t)}}
  (1+O(\Delta t)).
\]
Its leading factor exceeds one iff $B(\beta_{\ell_1};\kappa_t)>B(\beta_{\ell_2};\kappa_t)$,
i.e., $(\beta_{\ell_1}^2-\beta_{\ell_2}^2)\kappa_t>(1-\beta_{\ell_2})^2-(1-\beta_{\ell_1})^2$.
Dividing both sides by $(\beta_{\ell_2}^2-\beta_{\ell_1}^2)>0$ and
negating:
$\kappa_t<\kappa^*:=[(1-\beta_{\ell_1})+(1-\beta_{\ell_2})]/(\beta_{\ell_1}+\beta_{\ell_2})$,
which is~\eqref{eq:kappa-threshold}.
This establishes the envelope comparison and the certified-stride
comparison of Theorem~\ref{thm:depth}. \hfill$\square$ (Theorem~2a)

\textbf{Step 6: Actual drift under transversality.}
Write $v:=P^\perp_x\Delta x$ and $w:=P^\parallel_x\Delta x$.
From (A5$'$) with $\alpha:=\cos\theta_*-\zeta\ge 0$:
$2\langle J_\ell v,J_\ell w\rangle\ge-2\alpha\|J_\ell v\|\|J_\ell w\|\ge-\alpha(\|J_\ell v\|^2+\|J_\ell w\|^2)$,
where the last step applies AM-GM and uses $\alpha\ge 0$.
Hence:
\[
  \|J_\ell\Delta x\|^2=\|J_\ell v\|^2+\|J_\ell w\|^2+2\langle J_\ell v,J_\ell w\rangle
  \;\ge\;(1-\alpha)(\|J_\ell v\|^2+\|J_\ell w\|^2).
\]
Applying the lower-Lipschitz bounds ($\|J_\ell v\|\ge\underline{L}_\ell^\perp\|v\|\ge\gamma_*\beta_\ell\|v\|$
and similarly for $w$):
\begin{equation}
  \mathbb{E}\,\delta_\ell^2(t)\;\ge\;\underline{B}_\ell\,\Delta t^2\Theta_t-O(\Delta t^3),\quad
  \underline{B}_\ell:=(1-\alpha)\gamma_*^2[\beta_\ell^2\kappa_t+(1-\beta_\ell)^2]
  =\tfrac{(1-\alpha)\gamma_*^2}{2}\overline{B}_\ell.
  \label{eq:Blow}
\end{equation}
Combining~\eqref{eq:Bupp} for $\ell_2$ with~\eqref{eq:Blow} for $\ell_1$,
\begin{align*}
  \mathbb{E}\,\delta_{\ell_2}(t)^2
  &\le 2B(\beta_{\ell_2};\kappa_t)\,\Delta t^2\Theta_t+O(\Delta t^3),\\
  \mathbb{E}\,\delta_{\ell_1}(t)^2
  &\ge (1-\alpha)\gamma_*^2B(\beta_{\ell_1};\kappa_t)\,\Delta t^2\Theta_t-O(\Delta t^3).
\end{align*}
If $B(\beta_{\ell_1};\kappa_t)/B(\beta_{\ell_2};\kappa_t)>2/\bigl((1-\alpha)\gamma_*^2\bigr)$,
the leading term of the first bound is strictly smaller than that of
the second, so $\mathbb{E}\,\delta_{\ell_2}(t)^2<\mathbb{E}\,\delta_{\ell_1}(t)^2$
for small enough $\Delta t$, completing Theorem~2b. \hfill$\square$

\subsection{Discussion}
\label{sec:discussion}

Theorem~\ref{thm:threshold} is proved under a two-mode Gaussian mixture
(A3), the simplest setting in which the mode-flip probability can be
computed in closed form. We use it as a simplified model: the data
distributions of the evaluated models are multimodal and non-Gaussian,
and we do not claim that they satisfy (A3).
Theorem~\ref{thm:threshold} is related to analyses of critical noise
levels at which the modes of a mixture separate during
sampling~\citep{raya2023spontaneous,biroli2024dynamical,li2024critical},
and more broadly to theoretical studies of diffusion models on simple
mixture distributions~\citep{shah2023learning,chidambaram2024guidance}.
Two parts of the analysis do not depend on (A3): the MMSE bound of
Lemma~\ref{lem:posterior-var} holds for any prior with covariance
$\Sigma_c$, and Theorem~\ref{thm:depth} does not use (A3).
Theorem~\ref{thm:threshold} also does not use the smoothness or the
reach of $\Mc$; only Theorem~\ref{thm:depth} needs the tubular
neighbourhood of (A2), and the runtime schedule never evaluates $g$.
Assumption~(A3.ii) ($\lambda_i\ge 1$) is a coordinate normalisation used
only for the simplified bound $\tfrac{1}{2}\trS$.

\textbf{Exact versus learned score.}
Both theorems are stated for the exact score.
For a learned score $\hat{s}_\tau$, the learned Tweedie estimate
$\hat{x}_0^\tau:=(x_\tau+(1-\abar_\tau)\hat{s}_\tau(x_\tau))/\sqrt{\abar_\tau}$
satisfies
$\hat{x}_0^\tau-x_0^\tau=\tfrac{1-\abar_\tau}{\sqrt{\abar_\tau}}\,(\hat{s}_\tau-s_\tau)$,
so by the triangle inequality
$\bigl(\mathbb{E}\|\hat{x}_0^\tau-x_0\|^2\bigr)^{1/2}\le\mathcal{E}(\tau)^{1/2}
+\tfrac{1-\abar_\tau}{\sqrt{\abar_\tau}}\bigl(\mathbb{E}\|\hat{s}_\tau-s_\tau\|^2\bigr)^{1/2}$.
For $\tau\le T^*$ the factor $(1-\abar_\tau)/\sqrt{\abar_\tau}$ is at
most $1/\sqrt{2}$, so score errors are damped rather than amplified in
the regime where we cache.

\textbf{Regime condition of Theorem~\ref{thm:depth}.}
Theorem~\ref{thm:depth} is conditional on the depth ordering (A5) and on
$\kappa_t<\kappa^*$~\eqref{eq:kappa-threshold}; the latter is a regime
condition with an explicit form that we assume rather than derive from
the noise schedule.

\textbf{Codimension.}
The safe-side bound scales with $\trS$, the total variance along the
$d_c=D-k$ tangent directions, so for a fixed variance per direction a
higher codimension $k$ gives a smaller bound. In practice, outputs with
many constraints, such as multi-view video, are also less tolerant of
small errors: a sub-pixel misalignment between views already lowers
PSNR and raises LPIPS. We believe this is why the conservative setting
$(2,3)$ worked best for SV4D~2.0.

\textbf{Other samplers.}
Extensions to flow-matching models require redefining $T^*$ via the
corresponding SNR parameterisation; for stochastic samplers, injected
noise adds $O(\sqrt{1-\abar_t})$ per cached step and is left to future work.

\section{CMDM Membership of the Evaluated Models}
\label{app:cmdm}

Table~\ref{tab:cmdm} lists, for each evaluated model, the constraint
encoded by $g(x;c)$ and how the model keeps its samples on $\Mc$:
statistically (learned from data), structurally (built into the
parameterisation or architecture), or through learned geometry.
As discussed in \S\ref{sec:constraint-framework}, this mechanism does
not affect how \textsc{ManifoldCache} is applied.

\begin{table}[h]
\centering
\caption{Constraint encoded by $g(x;c)$ and how each evaluated model
keeps its samples on $\Mc$.}
\label{tab:cmdm}
\small
\begin{tabular}{l p{6.4cm} l}
\toprule
\textbf{Model} & \textbf{Constraint encoded by $g(x;c)$} & \textbf{How it is maintained} \\
\midrule
Med-DDPM       & anatomical intensity and continuity & statistical \\
DiffCSP++      & space-group and Wyckoff symmetry & structural \\
Proteina       & Ramachandran validity and steric-clash avoidance & learned geometry \\
SV4D~2.0, Lyra & multi-view photometric consistency & statistical \\
DiT-MC         & bond-geometry validity & learned geometry \\
UniMoMo        & $\mathrm{E}(3)$-equivariant block-graph validity & structural \\
DemoDiff       & discrete motif-graph validity (one category per token) & structural \\
\bottomrule
\end{tabular}
\end{table}

\section{Quality--Speed Trade-off on All Eight Models}
\label{app:tradeoff}

Figure~\ref{fig:tradeoff-all} shows the quality--speed trade-off for
all eight models, complementing the Med-DDPM view in
Figure~\ref{fig:teaser}.

\begin{figure}[t]
  \centering
  \includegraphics[width=0.9\textwidth]{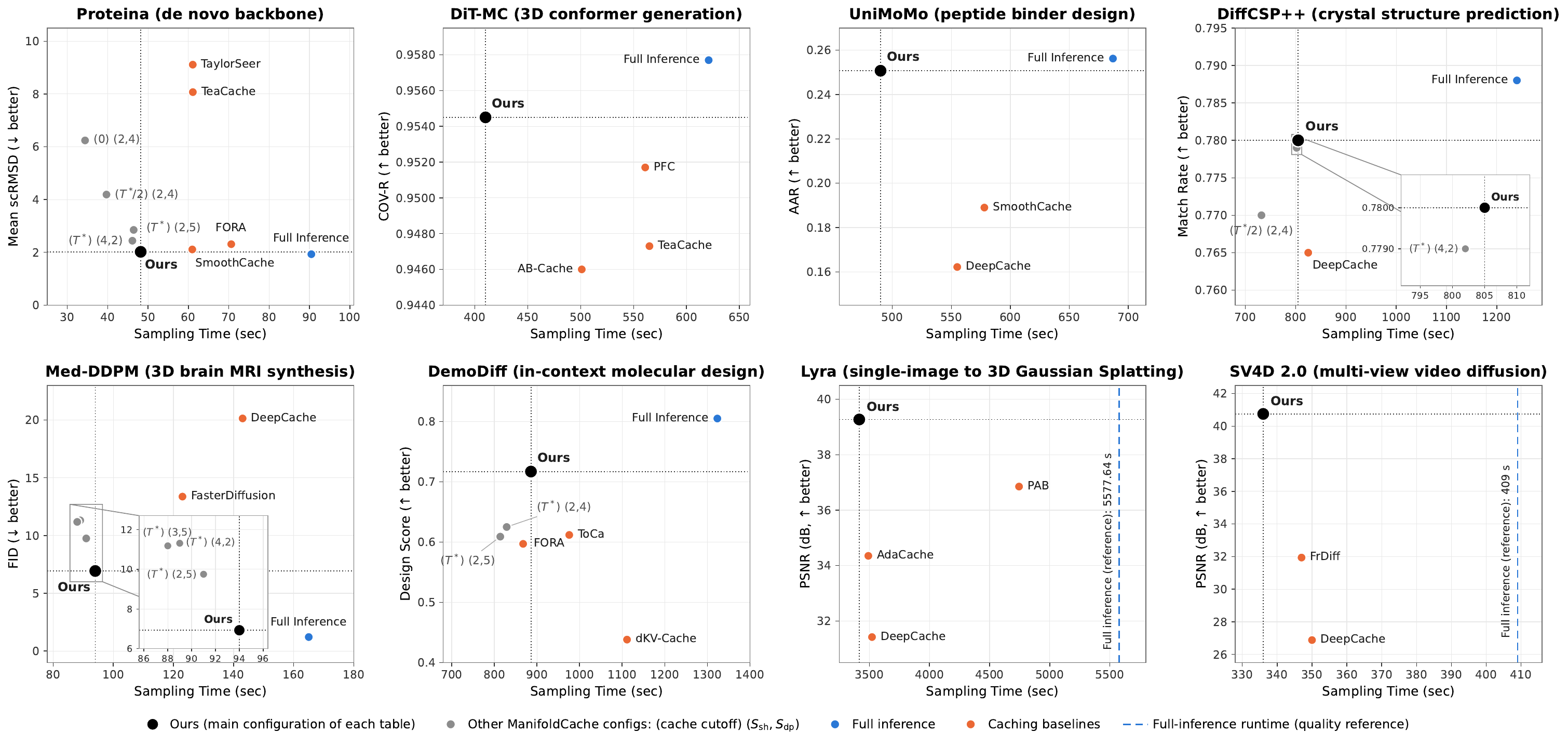}
  \caption{Quality--speed trade-off on all eight CMDMs. Each panel plots
  sampling time against the model's primary quality metric: mean scRMSD
  (Proteina), COV-R (DiT-MC), AAR (UniMoMo), Match Rate (DiffCSP++), FID
  (Med-DDPM), design Score (DemoDiff), and PSNR (Lyra, SV4D~2.0); the
  axis labels show whether higher or lower is better. ``Ours'' is the
  main \textsc{ManifoldCache} configuration in each table, the dotted
  lines pass through it, and grey points are the other
  \textsc{ManifoldCache} configurations and ablations. For Lyra and
  SV4D~2.0, quality is measured against full inference, so full
  inference appears only as a dashed line at its runtime. All values are
  taken from Tables~\ref{tab:runtime_scrmsd}--\ref{tab:viewpair_metrics}.}
  \label{fig:tradeoff-all}
\end{figure}

\section{Computational Cost}
\label{app:flops}

Table~\ref{tab:flops} reports inference FLOPs for full inference, for
the caching baseline with the fewest FLOPs, and for
\textsc{ManifoldCache}. \textsc{ManifoldCache} needs the fewest FLOPs on
five of the six models, up to $18.4\%$ fewer than the best baseline
(DemoDiff); on SV4D~2.0 it needs $1.6\%$ more.

\begin{table}[h]
\centering
\caption{Inference FLOPs (lower is better). ``Best baseline'' is the
caching baseline with the fewest FLOPs for each model.}
\label{tab:flops}
\small
\begin{tabular}{l c c c}
\toprule
\textbf{Model} & \textbf{Full Inference} & \textbf{Best baseline} & \textbf{\textsc{ManifoldCache}} \\
\midrule
Med-DDPM & $1.06\times10^{17}$  & $7.01\times10^{16}$  & $\mathbf{6.64\times10^{16}}$ \\
Proteina & $7.20\times10^{14}$  & $6.06\times10^{14}$  & $\mathbf{5.80\times10^{14}}$ \\
UniMoMo  & $3.24\times10^{14}$  & $2.04\times10^{14}$  & $\mathbf{2.03\times10^{14}}$ \\
SV4D~2.0 & $2.137\times10^{15}$ & $\mathbf{1.945\times10^{15}}$ & $1.977\times10^{15}$ \\
DiT-MC   & $1.81\times10^{14}$  & $1.65\times10^{14}$  & $\mathbf{1.46\times10^{14}}$ \\
DemoDiff & $1.52\times10^{16}$  & $1.241\times10^{16}$ & $\mathbf{1.013\times10^{16}}$ \\
\bottomrule
\end{tabular}
\end{table}

Table~\ref{tab:nfe} reports the number of function evaluations (NFE) on
two models from different domains, Med-DDPM (3D medical imaging) and
Proteina (protein design).
\textsc{ManifoldCache} needs the fewest NFE on both models: $142$ instead
of $250$ on Med-DDPM ($188$ for the best baseline) and $164$ instead of
$200$ on Proteina ($171$ for the best baseline).

\begin{table}[h]
\centering
\caption{Number of function evaluations (NFE; lower is better).
``Best baseline'' is the caching baseline with the fewest NFE for each
model.}
\label{tab:nfe}
\small
\begin{tabular}{l c c c}
\toprule
\textbf{Model} & \textbf{Full Inference} & \textbf{Best baseline} & \textbf{\textsc{ManifoldCache}} \\
\midrule
Med-DDPM & 250 & 188 & \textbf{142} \\
Proteina & 200 & 171 & \textbf{164} \\
\bottomrule
\end{tabular}
\end{table}

\section{Comparison with Speculative Sampling}
\label{app:spec}

Speculative methods and \textsc{ManifoldCache} reduce cost in different
ways. Speculative methods reduce the number of expensive model calls: a
cheap proposal drafts future states, and the target model then accepts
or rejects them, which keeps the samples exactly distributed as those of
the target model~\citep{debortoli2025accelerated,agrawal2025spiffy}.
\textsc{ManifoldCache} has no draft model, verifier, or rejection step.
It keeps every denoising step and lowers the cost of each one by reusing
block outputs according to a fixed schedule that depends only on the
noise level and the block depth.

This leads to three practical differences.
First, speculative methods need either a separate draft model, which is
costly to obtain for 3D MRI or protein generation, or additional
machinery. The training-free drafts of \citet{debortoli2025accelerated}
require a stochastic sampler (the authors note that their method does
not apply directly to deterministic samplers), and
Spiffy~\citep{agrawal2025spiffy} calibrates its draft graphs offline for
discrete diffusion language models. \citet{debortoli2025accelerated} also
note that, unlike large language models, diffusion models are not
necessarily memory-bound, so parallel verification may bring smaller
wall-clock gains. \textsc{ManifoldCache} needs no calibration and runs on
all eight backbones.
Second, the guarantees differ: speculative sampling preserves the output
distribution of the target model, whereas \textsc{ManifoldCache} bounds
the error of reuse and the probability of landing in the wrong mode
(Theorem~\ref{thm:threshold}).
Third, the two act on different parts of the trajectory: the analysis of
\citet{debortoli2025accelerated} indicates that the acceptance rate is
high at the beginning of denoising, which is the high-noise phase in
which Theorem~\ref{thm:threshold} advises against caching. In principle
the two could therefore be combined, with speculation reducing the
number of steps and caching reducing the cost of each remaining step; we
leave this to future work.

Table~\ref{tab:spec} compares both speculative methods with
\textsc{ManifoldCache} on Proteina. Neither method was designed for
continuous protein backbone generation, so both had to be adapted. In
our adaptations, drafts are verified token by token, which captures the
continuous dynamics poorly; drafts were therefore often rejected and
rolled back, both runs were slower than full inference, and the
resulting scRMSD was much higher. These numbers reflect our adaptations
rather than the performance of either method in the setting it was
designed for.

\begin{table}[h]
\centering
\caption{Speculative sampling and \textsc{ManifoldCache} on Proteina.
Time and scRMSD for full inference and \textsc{ManifoldCache} are taken
from Table~\ref{tab:runtime_scrmsd}.}
\label{tab:spec}
\small
\begin{tabular}{l c c c}
\toprule
\textbf{Method} & \textbf{Time (sec)} $\downarrow$ & \textbf{Mean scRMSD} $\downarrow$ & \textbf{FLOPs} $\downarrow$ \\
\midrule
Full Inference & 90.47 & 1.93 & $7.20\times10^{14}$ \\
\midrule
Speculative sampling~\citep{debortoli2025accelerated} & 281 & 16.2 & $2.8\times10^{15}$ \\
Spiffy~\citep{agrawal2025spiffy} & 377 & 18.2 & $3.8\times10^{15}$ \\
\midrule
\rowcolor{gray!15}
\textsc{ManifoldCache} $(T^*)\,(2,4)$ & \textbf{48.18} & \textbf{2.02} & $5.80\times10^{14}$ \\
\bottomrule
\end{tabular}
\end{table}

\section{Qualitative Results}

Figure~\ref{fig:lyra} shows that \textsc{ManifoldCache} achieves the best visual quality on Lyra, particularly in preserving fine details.

\begin{figure*}[t]
    \centering
    \includegraphics[width=1\textwidth]{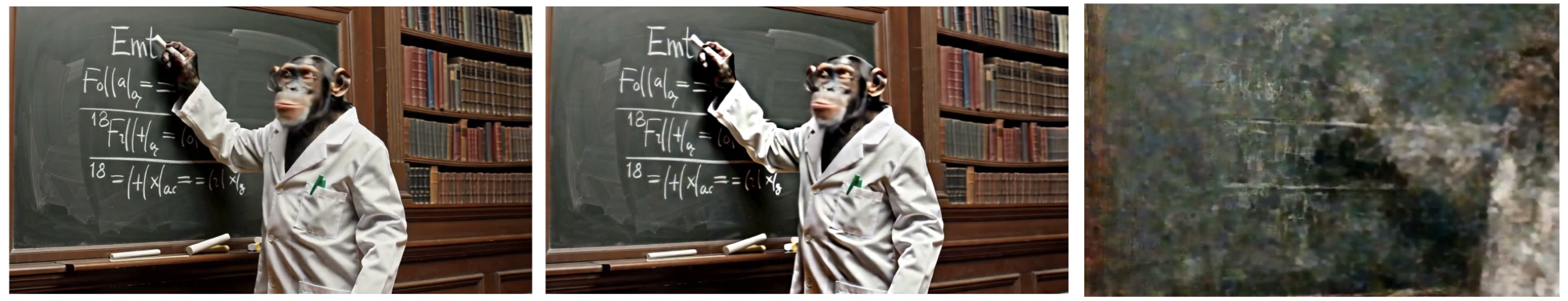}
    \caption{\small From left to right: \textsc{ManifoldCache}, PAB, and AdaCache. It can be seen that \textsc{ManifoldCache} achieves the best visual quality on Lyra.}
    \label{fig:lyra}
\end{figure*}

\section{Limitations and Future Work}
\label{limitt}
Despite strong empirical and theoretical performance, \textsc{ManifoldCache} has several important limitations that motivate future work. First, our theory relies on smooth closed manifold assumptions with well-defined tangent and normal decompositions. While these assumptions cover a broad range of scientific diffusion models, some structured generation problems involve non-smooth, discrete, combinatorial, or dynamically changing constraint sets where the local geometric decomposition may not exist or may vary discontinuously across the trajectory. \textbf{Extending manifold-aware caching to hybrid discrete--continuous diffusion processes, topology-changing manifolds, or non-Euclidean latent spaces remains an open direction}. Second, the theoretical guarantees are derived under simplified regularity and mixture assumptions designed to make the analysis tractable. Real CMDMs exhibit highly multimodal, non-Gaussian, and hierarchical distributions whose geometry may evolve across denoising stages. Although the ablations support the choice of $T^*$, \textbf{deriving sharper guarantees under more realistic score models, heavy-tailed distributions, or learned manifold geometries would strengthen the theoretical foundations of geometry-aware acceleration}. In addition, the depth ordering of Theorem~\ref{thm:depth} relies on the regime condition $\kappa_t<\kappa^*$, which we assume rather than derive. Third, the current schedule is analytically fixed from the noise schedule and block depth and does not adapt online to individual samples, prompts, or trajectory difficulty. Certain trajectories may approach the manifold substantially earlier or later than predicted by the global schedule, especially in highly multimodal settings. \textbf{Future work could investigate sample-adaptive or uncertainty-aware caching policies that dynamically estimate manifold proximity, score curvature, or feature drift during inference while retaining the low-overhead nature of the current framework}. Fourth, although \textsc{ManifoldCache} is backbone-agnostic, the present implementation focuses on inference-time feature reuse and does not jointly optimize with other acceleration mechanisms such as learned schedulers, consistency distillation, speculative denoising, low-rank adaptation, or mixed-precision execution. \textbf{Combining geometry-aware caching with solver acceleration, quantization, or token sparsification could potentially yield multiplicative speedups beyond those observed here}. Fifth, our experiments focus on single-device inference on existing pretrained checkpoints. Large-scale distributed generation systems introduce additional communication and synchronization bottlenecks that may alter the trade-off between recomputation and cache reuse. \textbf{Investigating geometry-aware caching under tensor parallelism, pipeline parallelism, or streaming generation settings is an important systems-level extension}. Finally, our work studies geometry primarily as a mechanism for safe acceleration. \textbf{A broader future direction is to treat manifold structure as a unifying principle connecting controllability, robustness, interpretability, and efficiency in scientific generative models}. We believe CMDMs provide a promising foundation for studying these properties jointly, particularly in emerging multimodal settings combining physical simulation, language conditioning, molecular reasoning, and embodied 3D generation.

\end{document}